\documentclass[12pt]{article}
\usepackage{amsmath}
\usepackage{graphicx,psfrag,epsf}
\usepackage{enumerate}
\usepackage{natbib}
\usepackage{url} 
\usepackage{float,cuted,threeparttable}
\usepackage{array}
\usepackage[skip=2pt, labelfont=bf]{caption}
\usepackage{enumitem}
\usepackage{multirow,bm}
\usepackage{tikz}
\usepackage{pgfplots}
\usepackage{amsthm}
\usepackage{amssymb,mathtools}

\usepackage{algpseudocode}
\usepackage{algorithm,setspace}
\algnewcommand{\Initialize}[1]{%
	\State \textbf{Initialize:}
	\hspace*{\algorithmicindent}\parbox[t]{.8\linewidth}{\raggedright #1}
}
\algdef{SE}[SUBALG]{Indent}{EndIndent}{}{\algorithmicend\ }%
\algtext*{Indent}
\algtext*{EndIndent}

\usetikzlibrary{positioning,calc,arrows,plotmarks}
\usepackage[font=small,skip=0pt]{subcaption}
\captionsetup{compatibility=false}

\newtheorem{theorem}{Theorem}

\newcommand{\blind}{0}

\addtolength{\oddsidemargin}{-.5in}%
\addtolength{\evensidemargin}{-.5in}%
\addtolength{\textwidth}{1in}%
\addtolength{\textheight}{1.3in}%
\addtolength{\topmargin}{-.8in}%

\begin{document}

\def\spacingset#1{\renewcommand{\baselinestretch}%
{#1}\small\normalsize} \spacingset{1}


\if0\blind
{
  \title{\bf Lagged Exact Bayesian Online Changepoint Detection with Parameter Estimation}
  \author{Michael Byrd, Linh Nghiem, Jing Cao \hspace{1cm}\\
    Department of Statistical Science \\ Southern Methodist University
    \date{} }
  \maketitle
} \fi

\if1\blind
{
  \bigskip
  \bigskip
  \bigskip
  \begin{center}
    {\Large \bf Lagged Exact Bayesian Online Changepoint Detection with Parameter Estimation}
\end{center}
  \medskip
} \fi

\bigskip
\begin{abstract}
Identifying changes in the generative process of sequential data, known as changepoint detection, has become an increasingly important topic for a wide variety of fields. A recently developed approach, which we call EXact Online Bayesian Changepoint Detection (EXO), has shown reasonable results with efficient computation for real time updates. The method is based on a \textit{forward} recursive message-passing algorithm. However, the detected changepoints from these methods are unstable.
We propose a new algorithm called Lagged EXact Online Bayesian Changepoint Detection (LEXO) that improves the accuracy and stability of the detection by incorporating $\ell$-time lags to the inference. The new algorithm adds a recursive \textit{backward} step to the forward EXO and has computational complexity linear in the number of added lags. Estimation of parameters associated with regimes is also developed. Simulation studies with three common changepoint models show that the detected changepoints from LEXO are much more stable and parameter estimates from LEXO have considerably lower MSE than EXO. We illustrate applicability of the methods with two real world data examples comparing the EXO and LEXO.

\end{abstract}

\noindent%
{\it Keywords:}  recursion, regime switching, forward-backward
\vfill

\newpage
\spacingset{1.45} 
\section{Introduction}
\label{sec:intro}

Changepoint detection is the process of detecting abrupt changes in the generative process of sequential data.  Analysis of this form has been used in many applications including climate change \cite{reeves2007review}, \cite{li2012multiple}, robotics \cite{niekum2015online}, and telecommunication \cite{shields2017application}. We consider the case where the data sequence is assumed to be grouped into non-overlapping blocks, as in \cite{barry1992product}, each of which is often called a regime.  Each regime is assumed to be generated from a fixed process. Boundaries between two adjacent regimes are called changepoints;  the goal of changepoint detection is to determine where these boundaries are located in observations of sequential data. While detecting changepoints, it is also often of interest to estimate the parameters associated with regimes in order to quantify the changes that have occurred.

Changepoint detection methods can be divided into offline and online methods. An offline method, like \cite{killick2012optimal}, \cite{saatcci2010gaussian} and \cite{maidstone2017optimal} needs to observe the whole sequence of data before aiming to make an inference about the generative process of the sequence.  On the contrary, an online method aims to update the inference at each time point with the newly observed data value from the sequence. In particular, from a Bayesian viewpoint, online changepoint detection started with the two similar methods developed by \cite{adams2007bayesian} and \cite{fearnhead2006exact}. The two methods can be inferred directly from the other; more importantly, both methods give exact and online Bayesian posterior inference, hence forth referred to as the EXact Online Bayesian Changepoint Algorithm (EXO). In the past decade, EXO has spawned a wealth of literature developing and expanding on the methodology; examples being particle filters \cite{fearnhead2007line}, regime classification \cite{ranganathan2012pliss},  online Thompson sampling \cite{mellor2013thompson}, and hyperparameter tuning \cite{caron2012line}. EXO is shown to be effective in detecting changepoints in real world examples like in \cite{adams2007bayesian} and \cite{fearnhead2006exact}.  

We propose the Lagged EXact Online Bayesian Changepoint Algorithm (LEXO) that improves the EXO by introducing a time lag into the online inference. Whereas EXO uses the data from time 1 to $t$ to make an inference on the current run length only at time $t$, our proposed method incorporates a backward inference on the run length at time $t-\ell$, for any $\ell \in \mathbb{N}$. This backward inference has been used extensively in dynamic linear time series models, like Kalman smoothers, and Hidden Markov Model smoothers, see \cite{douc2014nonlinear}. As we will illustrate, the main advantage of incorporating this backward inference is to stabilize the inference made by EXO. This stability is important for estimating parameters of regimes; indeed, parameter estimation  from LEXO has a considerably smaller MSE than that from EXO. Moreover, we show that LEXO can be computed recursively with  computation complexity being linear in the number of lags.

The paper is developed in the following way. In Section 2, we review the theory of EXO and develop the theory for LEXO for the run length. In Section 3, we develop parameter estimation for both EXO and LEXO. 
In Section 4, we illustrate the performance for the EXO and LEXO via an extensive simulation study for three common changepoint models. We illustrate the applicability of LEXO in Section 5 with two real data examples and conclude the paper in Section 6.

\section{Run-length estimation}
\subsection{EXO}

Let stochastic process $\{X_{\tau(t)}\}_{t = 1}^\infty$ be observed sequentially as $x_t$ in equally spaced intervals of time, $t = 1,2,\ldots$, of an unknown length.  The realizations of the process form non-overlapping regimes, where each observation is generated strictly from one regime; denote $\tau(t)$ to index each regime, where time $t$ uniquely identifies the regime it was generated from. Further, assume that observations generated within the same regime are $\textit{iid}$, and also independent from observations in other regimes; this formulation is known as a product partition model, see \cite{barry1992product}.

For a given regime starting at $t^*$, we assume that 
\[
x_{t^*}, x_{t^* + 1}, \ldots \overset{i.i.d.}{\sim} f(x \vert \boldsymbol{\eta}_{\tau(t^*)}).
\]
Here, $f$ denotes the probability distribution with regime specific parameter $\boldsymbol{\eta}_{\tau(t^*)}$.  We additionally assume that the underlying distribution does not change with time, only the underlying parameter of the distribution.  A changepoint is defined as the starting point for a new regime.  The run length at a given time, $r_t$, is defined as the number of steps taken since the last changepoint occurred.  Formally, a changepoint occurs at $t$ when $\tau_{t} = \tau_{t-1} + 1$ and the run length at $t$ is $r_t = t - t^*$ for $t \geq t^*$, where $t^*$ is the latest changepoint before time $t$.

The EXO algorithm computes the exact posterior distribution for the current run length at each time point, and changepoints are learned from these distributions. This gives a flexible model for a practitioner to make informed decisions about the occurrence of a changepoint at each time point.  For instance, if the concentration of the mass of the run length's distribution shifts greatly toward 0 at time $t+1$ after being mostly concentrated around the max possible run length at time $t$, then this would indicate a change in the regime.

Following the same notation as in \cite{adams2007bayesian}, we denote $r_t$, $\boldsymbol{x}_{a:b}$, and $\boldsymbol{x}_t^{(r)}$ as the current run length at time $t$, the set of observations from time $a$ to time $b$, and the set of observations associated with the regime at time $t$, respectively. Then, the goal is to find
\begin{equation}
P(r_t |\boldsymbol{x}_{1:t}) = \dfrac{P(r_t, \boldsymbol{x}_{1:t})}{P(\boldsymbol{x}_{1:t})},
\label{eq:changepoint}
\end{equation}
corresponding to the posterior distribution of the current run length at time $t$ given all the observed data up to time $t$. As shown in \cite{adams2007bayesian} and \cite{fearnhead2006exact}, exact posterior inference can be found by noting
\begin{equation}
\begin{split}
\theta_{t} & \triangleq P(r_t,\boldsymbol{x}_{1:t}) =  \sum_{r_{t - 1}=0}^{t-1}P(r_t,r_{t-1},\boldsymbol{x}_{1:t})  \\ & = 
\sum_{r_{t - 1}=0}^{t-1}  P(r_t,x_t|r_{t-1},\boldsymbol{x}_{1:t-1})P(r_{t-1},\boldsymbol{x}_{1:t-1}) \\ & = 
\sum_{r_{t - 1}=0}^{t-1} \underbrace{P(r_t|r_{t-1})}_{\text{prior}} \underbrace{P(x_t |r_{t},\boldsymbol{x}_{t-1}^{(r_{t-1})})}_{\text{likelihood}}\underbrace{P(r_{t-1},\boldsymbol{x}_{1:t-1})}_{\theta_{t-1}}.
\end{split}
\label{eq:message-passing}
\end{equation}
Equation \eqref{eq:message-passing} defines a forward message passing scheme from $\theta_{t-1}$ to $\theta_{t}$. The likelihood depends only on $\boldsymbol{x}_t^{(r)}$, and the prior is specified in terms of a hazard function $H$:
\[
P(r_t|r_{t-1}) = \begin{cases}
H(r_{t-1}+1) & \mbox{if } r_t = 0 \\
1-H(r_{t-1}+1) & \mbox{if } r_t = r_{t-1} + 1 \\
0 & \text{otherwise}
\end{cases} .
\]
Here, \[
H(x) = \dfrac{P_{\texttt{gap}}(g=x)}{\sum_{t=x}^{\infty}P_{\texttt{gap}}(g=t)},
\]
and $P_{\texttt{gap}}(g)$ is a prior distribution for the interval between changepoints. As with \cite{adams2007bayesian}, this prior distribution is chosen to be the geometric distribution with time scale $\lambda_\mathrm{gap}$.  Thus, the hazard function is constant at the prior hazard rate $1/\lambda_\texttt{gap}$.  Figure \ref{figure:EXOPath} illustrates the path of calculation under the given hazard, where, in red, all possible routes to get a run length of 1 at time 4 are shown.  The EXO calculation incorporates all of these possible paths, giving the exact probability of reaching that point under every path.  

\begin{figure}
	\centering
	\begin{tikzpicture}[inner sep=3.2, scale=1, transform shape]{x=1cm, y=1cm, node distance=1pt}
	\tikzstyle{main node}=[draw, circle,thin]
	\draw[->,>=stealth,thick] (.7,-.25) -- coordinate (y axis mid)(.7,6.4);
	\draw[->,>=stealth,thick] (.7,-.25) -- coordinate (x axis mid)(7.5,-.25);
	\foreach \x in {1,...,7}
	\draw (\x,-0.20) -- (\x,-.30)
	node[anchor=north] at (\x, -.3) {\small \x};
	\foreach \y in {0,...,6}
	\draw (.65,\y) -- (.75,\y) 
	node[anchor=east] {\small \y}; 
	
	\node[below=0.5cm] at (x axis mid) {Time $(t)$};
	\node[above=0.3cm, rotate = 90] at (.4,3) {Run length $(r_t)$};
	
	\node[main node] (A) at (1,0) {};
	\node[main node] (B) [right=19pt of A] {};
	\node[main node] (C) [right=19pt of B] {};
	\node[main node] (D) [right=19pt of C] {};
	\node[main node] (E) [right=19pt of D] {};
	\node[main node] (F) [right=19pt of E] {};
	\node[main node] (G) [right=19pt of F] {};
	
	\node[main node] (K1) [above=19pt of B] {};
	\node[main node] (K2) [above=19pt of C] {};
	\node[main node, cyan!70!black, thick, fill] (K3) [above=19pt of D] {};
	\node[main node] (K4) [above=19pt of E] {};
	\node[main node] (K5) [above=19pt of F] {};
	\node[main node] (K6) [above=19pt of G] {};
	
	\node[main node] (H1) [above=19pt of K2] {};
	\node[main node] (H2) [above=19pt of K3] {};
	\node[main node] (H3) [above=19pt of K4] {};
	\node[main node] (H4) [above=19pt of K5] {};
	\node[main node] (H5) [above=19pt of K6] {};

	\node[main node] (M1) [above=19pt of H2] {};
	\node[main node] (M2) [above=19pt of H3] {};
	\node[main node] (M3) [above=19pt of H4] {};
	\node[main node] (M4) [above=19pt of H5] {};
	
	\node[main node] (N1) [above=19pt of M2] {};
	\node[main node] (N2) [above=19pt of M3] {};
	\node[main node] (N3) [above=19pt of M4] {};
	
	\node[main node] (P1) [above=19pt of N2] {};
	\node[main node] (P2) [above=19pt of N3] {};
	
	\node[main node](Q1)[above = 19pt of P2]{};
	
	\draw[dashed] (C)--(D)--(E);
	\draw[dashed] (K1) --(H1) -- (M1) -- (N1) --(P1)--(Q1);
	\draw[dashed](B) --(K2) --(H2) -- (M2) -- (N2)--(P2) ;
	\draw[dashed](K3) --(H3) --(M3)--(N3);
	\draw[dashed](D) -- (K4) -- (H4)--(M4);
	
	\tikzset{edge/.style = {->,> = latex'}}
	\draw[edge, red!70!black,line width=1mm] (A) to (B);
	\draw[edge, red!70!black,line width=1mm] (B) to (C);
	\draw[edge, red!70!black,line width=1mm] (C) to (K3);
	\draw[edge, red!70!black,line width=1mm] (A)--(K1);
	\draw[edge,  red!70!black,line width=1mm] (K1)--(C);
	
	\draw[edge, green!40!black, line width=1mm] (E) -- (K3);
	\draw[edge, green!40!black, line width=1mm] (F) -- (E);
	\draw[edge, green!40!black, line width=1mm] (H3) -- (K3);
	\draw[edge, green!40!black, line width=1mm] (F) -- (H3);
	\draw[edge, green!40!black, line width=1mm] (M3) -- (H3);
	\draw[edge, green!40!black, line width=1mm] (G) -- (F);
	\draw[edge, green!40!black, line width=1mm] (G) -- (M3);
	\draw[edge, green!40!black, line width=1mm] (N3) -- (M3);
	\draw[edge, green!40!black, line width=1mm] (K6) -- (F);
	\draw[edge, green!40!black, line width=1mm] (H5) -- (K5) -- (E);
	
	\draw[dashed] (K2) -- (D);
	\draw[dashed] (H1) -- (D);
	\draw[dashed] (H2) -- (E);
	\draw[dashed] (M1) -- (E);
	\draw[dashed] (K4) -- (F);
	\draw[dashed] (M2) -- (F);
	\draw[dashed] (N1) -- (F);
	
	\draw[dashed] (K5) -- (G);
	\draw[dashed] (H4) -- (G);
	\draw[dashed] (N2) -- (G);
	\draw[dashed] (P1) -- (G);
	
	%
	\end{tikzpicture}	
	
	\caption{The run length illustrated for $t = 1,\dots,7$.  The blue shaded node gives $r_4 = 1$, where the red path signifies the forward paths possible to reach that run length under the model assumptions.  Additionally, the green lines illustrate the backwards path for LEXO-3, again, under model assumptions.}
	\label{figure:EXOPath}
	
\end{figure}
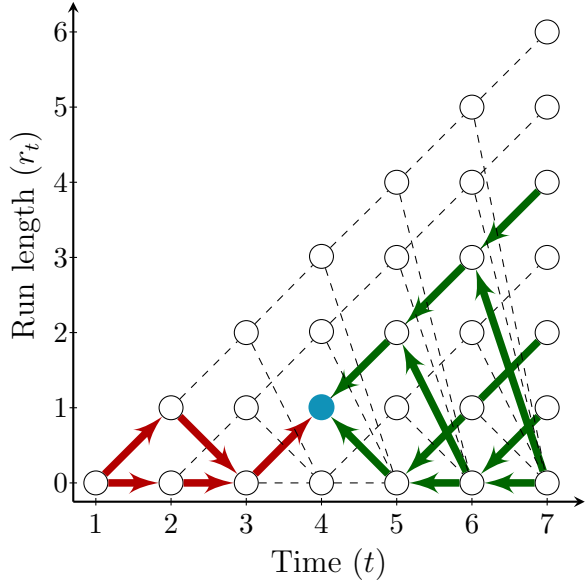

EXO has been incorporated in many places using, for example, Gaussian Processes \cite{knoblauch2018spatio} and Thompson Sampling \cite{mellor2013thompson}, but the forward-message passing scheme of EXO is the most convenient when the data is assumed to follow a distribution that belongs to the exponential family with conjugate priors. In such situation, the likelihood in equation \eqref{eq:message-passing} is characterized by a finite number of sufficient statistics, which in turn can be calculated incrementally as data arrives. More specifically, exponential family likelihoods have the form
\[ P(\bm{x} \vert \bm{\eta}) = h(\bm{x}) \exp(\bm{\eta^\top}\bm{U(x)}-A(\bm{\eta})), 
\]
where \[A(\bm{\eta}) = \log \int h(\bm{x}) \exp(\bm{\eta^\top}\bm{U}(\bm{x})) d\bm{\eta}\] and $\bm{U}(\bm{x})$ denotes the sufficient statistics for $\bm{\eta}$. The conjugate prior for $\bm{\eta}$ has the form
\[ P(\bm{\eta \vert \chi}, \nu) = \tilde{h}(\bm{\eta})\exp\left(\bm{\eta}^\top\bm{\chi}- \nu A(\bm{\eta}) - \tilde{A}(\bm{\chi}, \bm{\nu})\right),
\] where $\bm{\chi}$ and $\nu$ denote the hyperparameter. So, when conditioned on $r_t=r$, the posterior distribution $P(\bm{\eta}_t\vert r_t= r, \bm{x}_{t-1}^{(r_{t-1})})$ has exactly the same form as the prior, but with the new parameter $\bm{\chi}_t^{(r)}$ and $\nu_t^{(r)}$ given by
\[ \bm{\chi}_t^{(r)} = \bm{\chi}_\text{prior} +  \sum_{x_t \in \bm{x}_{t-1}^{(r_{t-1})}} \bm{U}(x_t)
\]
and
\[ \nu_t^{(r)} = \nu_\text{prior} + r_t.
\]
Lastly, the likelihood $P(x_t \vert r_t, \bm{x}_{t-1}^{(r_{t-1})})$ can be calculated by marginalizing the parameter $\bm{\eta}_{\tau(t)}$. Note that, in this updating scheme, if $r_t=0$ then $x_t$ starts a new regime, which implies the likelihood is calculated under the prior value for the hyperparameters. The EXO-algorithm, as developed by \cite{adams2007bayesian}, is summarized in the Algorithm \ref{algo:exo}. 
\begin{algorithm}
	\setstretch{1}
	\caption{EXO Algorithm}
	\label{alg:LEXO}
	\begin{algorithmic}[]
	\Initialize{} 
	\Indent
	\begin{align*}
	&P(r_1 = 0) = 1; ~ P(x_1, r_1) = P(x_1 \vert \bm{\chi}_1^{(0)}, \nu_1^{(0)} ) \\ & \bm{\chi}_1^{(0)} = \bm{\chi}_\text{prior}; ~ \nu_1^{(0)} = \nu_\text{prior}
	\end{align*}
	\EndIndent
	\State Update sufficient statistics:
	\begin{align*}
	\bm{\chi}_2^{(0)} = \bm{\chi}_\text{prior}; \quad   \nu_2^{(0)} = \nu_\text{prior} \\
	\bm{\chi}_2^{(1)} = \bm{\chi}_1^{0} + \bm{U}(x_1); \quad \nu_2^{(1)} = \nu_1^{(0)} +1 \\
	\end{align*}
	\For {$t=2,3,\ldots$} 
		\State Calculate predictive probabilities:
	\[\pi_t^{(r)}=P(x_t \vert r_t = r, \bm{x}_{t-1}^{r_{t-1}}) = P(x_t \vert \bm{\chi}_t^{(r)}, \nu_t^{(r)})
	\]
	\State Calculate growth probabilities:
	\[P(r_t = r_{t-1}+1, \bm{x}_{1:t}) = P(r_{t-1},\bm{x}_{1:{t-1}} )\pi_t^{(r_t)} (1-H) 
	\]
	\State Calculate changepoint probabilities:
	\[P(r_t = 0, \bm{x}_{1:t}) = \sum_{r_{t-1}} P(r_{t-1},\bm{x}_{1:t-1}) \pi_t^{(0)} H
	\]
	\State Calculate marginal probabilities:
	\[P(\bm{x}_{1:t}) = \sum_{r=0}^{t-1} P(r_t=r, \bm{x}_{1:t})
	\]
	\State Normalize:
	\[P(r_t = r \vert \bm{x}_{1:t}) = \frac{P(r_t = r, \bm{x}_{1:t})}{P(\bm{x}_{1:t})}, \quad r = 0, \ldots, t-1
	\]
	\State Update sufficient statistics:
	\begin{align*} &\bm{\chi}_{t+1}^{(0)} = \bm{\chi}_\text{prior} ,~ \nu_{(t+1)}^{(0)} = \nu_\text{prior} \\
	&\bm{\chi}_{t+1}^{(r_t+1)} = \bm{\chi}_t^{(r_t)} + \bm{U}(x_t) ,~ \nu_{t+1}^{(r_t+1)} = \nu_t^{(r_t)} +1 \\	
	\end{align*}
	\EndFor
\end{algorithmic}
\label{algo:exo}	
\end{algorithm}

We briefly mention the standard pruning procedure that is often used with EXO, for example \cite{caron2012line}.  Often, many of the values of the posterior run length are very small.  This is especially true after a changepoint happens, where the run length probabilities shift back toward 0.  The common tactic to handle this is to force any values less than some threshold, say $10^{-5}$, to be 0.  This methodology often leaves to a massive reduction in calculations and memory, and is easily extended to the LEXO framework below.

\subsection{LEXO}
Note that EXO uses data from time 1 to $t$ to make an inference at time $t$; hence the algorithm only makes a forward pass.  We consider the incorporation of a backward pass for estimating the regime parameters at time $t-\ell$, i.e ${r}_{t-\ell}$ and $\boldsymbol{\eta}_{t-\ell}$ for some $\ell \in \mathbb{N}$. We will refer to the methodology as lagged exact online inference with $\ell$ lags (LEXO-$\ell$). In other words, at each time point in the sequence, we refine the inference made in the past. Our algorithm remains online in the sense that once a new datum is observed the past data need not be reused to update the model.  Moreover, the calculations consist of simple operations of already computed values from the EXO calculation, which can be used to give more refined estimates of ${r}_{t-\ell}$, and hence $\eta_{t-\ell}$ over time.  Figure \ref{figure:EXOPath} illustrates the incorporation of three lags, shown in green, into the previous example of getting a run length of $1$ at time $t=4$.  With the incorporation of LEXO-3 an additional 7 possible paths from time 5, 6, and 7 are incorporated, in addition to the 2 possible forward paths already computed via EXO. An example backward path to $r_4 = 1$ is $(r_7, r_6, r_5, r_4) = (0,3,2,1)$.

\begin{theorem}
	Given a sequence of data $\{x_t\}_{t=1}^{\infty}$ follows a product partition model, LEXO-$\ell$ gives exact posterior distributions for the run length for any $\ell \in \mathbb{N}$. Moreover, these run length distributions are updated recursively, with the computation complexity $\mathcal{O}(\ell t)$.
\end{theorem}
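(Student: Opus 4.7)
The plan is to establish a forward--backward decomposition of the target posterior $P(r_{t-\ell} \mid \boldsymbol{x}_{1:t})$ that reuses the forward messages $\theta_{t-\ell}^{(r)}$ already computed by EXO and introduces a backward message admitting a cheap recursion. By Bayes' rule,
\[
P(r_{t-\ell} = r \mid \boldsymbol{x}_{1:t}) \;=\; \frac{\theta_{t-\ell}^{(r)}\,\beta_\ell^{(r)}}{\sum_{r'} \theta_{t-\ell}^{(r')}\,\beta_\ell^{(r')}},
\qquad
\beta_k^{(r)} \;\triangleq\; P\!\left(\boldsymbol{x}_{t-k+1:t} \,\middle|\, r_{t-k}=r,\,\boldsymbol{x}_{1:t-k}\right),
\]
so the task reduces to computing $\beta_\ell^{(r)}$ exactly for every admissible $r$.

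Next I would derive the backward recursion by inserting $r_{t-k+1}$ and marginalizing. Using the Markov property of the run-length chain and the product partition structure (so that, conditional on the regime boundaries determined by the run lengths, observations in different regimes are independent), the joint $P(x_{t-k+1}, \boldsymbol{x}_{t-k+2:t}, r_{t-k+1}=r' \mid r_{t-k}=r, \boldsymbol{x}_{1:t-k})$ factors into the hazard transition $P(r_{t-k+1}=r' \mid r_{t-k}=r)$, the one-step predictive $P(x_{t-k+1} \mid r_{t-k+1}=r', \boldsymbol{x}_{1:t-k})$, and the next backward message $\beta_{k-1}^{(r')}$. Because the hazard is supported on $\{0,\,r+1\}$, this collapses to the two-term recursion
\[
\beta_k^{(r)} \;=\; H\,\pi_{t-k+1}^{(0)}\,\beta_{k-1}^{(0)} \;+\; (1-H)\,\pi_{t-k+1}^{(r+1)}\,\beta_{k-1}^{(r+1)},
\qquad \beta_0^{(r)} \equiv 1,
\]
where the $\pi$'s are exactly the predictive probabilities already stored by EXO. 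A straightforward induction on $k$ then yields exactness of $\beta_\ell$, and combined with the Bayes step above this proves exactness of the LEXO-$\ell$ posterior. For the complexity, at time $t$ the message $\beta_k$ has at most $t-k$ entries, each obtained from a two-term sum of pre-computed quantities, so iterating for $k=1,\dots,\ell$ costs $\mathcal{O}(\ell t)$ per time step.

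The main obstacle I anticipate is the conditioning bookkeeping. Specifically, I need to verify that $P(x_{t-k+1} \mid r_{t-k+1}=r', \boldsymbol{x}_{1:t-k})$ genuinely coincides with the EXO predictive $\pi_{t-k+1}^{(r')}$ in both branches---the new-regime branch $r'=0$, where the prior hyperparameters $(\boldsymbol{\chi}_{\text{prior}}, \nu_{\text{prior}})$ apply, and the continuing-regime branch $r'=r+1$, where the sufficient statistics $(\boldsymbol{\chi}_{t-k+1}^{(r+1)}, \nu_{t-k+1}^{(r+1)})$ accumulated over the last $r+1$ observations apply---and that dropping the conditioning on $r_{t-k}=r$ inside $\beta_{k-1}^{(r')}$ is legitimate. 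Both reductions follow from the product partition assumption together with the Markov structure of the run-length chain, but careful bookkeeping is needed to confirm that the backward pass consists entirely of operations on quantities already computed by the forward pass, which is what makes the $\mathcal{O}(\ell t)$ cost attainable.
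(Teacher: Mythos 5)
Your approach is correct, but it is a genuinely different decomposition from the paper's. You prove exactness via the classical two-filter (forward--backward) identity $P(r_{t-\ell}=r\mid \bm{x}_{1:t}) \propto \theta_{t-\ell}^{(r)}\,\beta_{\ell}^{(r)}$, where $\beta_k^{(r)} = P(\bm{x}_{t-k+1:t}\mid r_{t-k}=r,\bm{x}_{1:t-k})$ obeys the two-term recursion $\beta_k^{(r)} = H\,\pi_{t-k+1}^{(0)}\beta_{k-1}^{(0)} + (1-H)\,\pi_{t-k+1}^{(r+1)}\beta_{k-1}^{(r+1)}$ built from the hazard and the stored EXO predictives, followed by a final normalization. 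The paper instead recurses directly on the smoothed run-length posteriors: it expands $P(r_{t-\ell}\mid\bm{x}_{1:t}) = \sum_{r_{t-\ell+1}} P(r_{t-\ell}\mid r_{t-\ell+1},\bm{x}_{1:t})\,P(r_{t-\ell+1}\mid\bm{x}_{1:t})$, observes that the backward kernel is degenerate when $r_{t-\ell+1}>0$ and collapses to the EXO filtered distribution $P(r_{t-\ell}\mid\bm{x}_{1:t-\ell})$ when $r_{t-\ell+1}=0$, giving $P(r_{t-\ell}=r\mid\bm{x}_{1:t}) = P(r_{t-\ell+1}=r+1\mid\bm{x}_{1:t}) + P(r_{t-\ell}=r\mid\bm{x}_{1:t-\ell})\,P(r_{t-\ell+1}=0\mid\bm{x}_{1:t})$, i.e.\ LEXO-$\ell$ is obtained from LEXO-$(\ell-1)$ and the EXO filter. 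Both routes are exact and cost $\mathcal{O}(\ell t)$ per update, and the conditional-independence facts you flag (dropping $r_{t-k}$ from $\beta_{k-1}$, and the one-step predictives coinciding with $\pi_{t-k+1}^{(0)}$ or $\pi_{t-k+1}^{(r+1)}$) are precisely the product-partition/Markov facts the paper itself invokes, so your bookkeeping concern is resolvable. What each buys: your $\alpha$--$\beta$ formulation mirrors standard HMM smoothing and makes exactness transparent by induction on $k$, but requires caching the predictive probabilities (or sufficient statistics) over the lag window and an explicit renormalization; the paper's formulation manipulates only already-normalized run-length distributions (EXO's filtered posteriors plus the previous lag's output), which is leaner bookkeeping and is the exact form reused in the paper's subsequent parameter-estimation recursion (Theorem 2). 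In both cases the backward sweep is anchored at the current endpoint and must be re-run as each new datum arrives, which is what the per-update $\mathcal{O}(\ell t)$ complexity accounts for.
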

\begin{proof}
	We will start with $\ell=1$, so the goal is to compute the distribution $P(r_{t-1} | \bm{x}_{1:{t}})$. By the chain rule, we have
	\begin{equation}
	P(r_{t-1} \vert \bm{x}_{1:t}) = \sum_{r_{t}} P(r_{t-1} \vert r_{t}, \bm{x}_{1:t}) P(r_{t} \vert \bm{x}_{1:t})
	\label{eq:lexo-1}
	\end{equation}
	where $r_t \in \{0, r_{t-1}+1\}$. The second term in the sum of \eqref{eq:lexo-1} is computed by EXO at time $t$.  For the first term, if $r_{t} \neq 0$, then $P(r_{t-1} = r_{t}-1 \vert r_{t}, \bm{x}_{1:t}) = 1$. If $r_{t}=0$, then the chain rule gives
	\begin{align*}
	P(r_{t-1} | r_{t}=0, \bm{x}_{1:t}) &\propto P(r_{t-1}, r_{t}=0, \bm{x}_{1:{t}}) \\& \propto 
	P(x_{t} | r_{t-1}, r_{t}=0, \bm{x}_{1:t-1})  P(r_{t}=0|r_{t-1}, \bm{x}_{1:t-1}) \times   P(r_{t-1}, \bm{x}_{1:t-1}) \\ & \propto 
	P(x_{t} | r_{t}=0, \bm{x}_{1:t-1})  P(r_{t-1}, \bm{x}_{1:t-1}) \\ & \propto P(r_{t-1}  \vert \bm{x}_{1:t-1}),
	\end{align*}
	which is exactly the run length distribution computed by EXO at time $t-1$. In the above derivation
	the third proportionality follows because $P(r_t=0 \vert r_{t-1}, \bm{x}_{1:{t-1}}) = H$, which is a constant. The last proportionality follows because $P(x_{t} | r_{t}=0, \bm{x}_{1:t-1})$
	does not depend on $\bm{x}_{1:t-1}$.  Intuitively, if we have a new regime at $t$, all the past information before that becomes irrelevant. Putting everything together, we have
	\begin{align*}
	P(r_{t-1} = r  | \bm{x}_{1:{t}}) =   P(r_{t} = r+1 | \bm{x}_{1:{t}})  + P(r_{t-1} = r | \bm{x}_{1:t-1}) P(r_{t}=0 | \bm{x}_{1:{t}})
	\end{align*}
	for $r=0,\ldots, t-1$. The above proof shows that the run length distribution at time $t-1$ from LEXO-1 is completely determined by the run length distribution at time $t-1$ and $t$ from EXO. 
	
	Generally, for any $\ell\geq 1$ the distribution of run length from LEXO$-\ell$ at time $t-\ell$ is completely determined by the run length distribution at time $t-\ell$ from EXO and the run length distribution at time $t-(\ell-1)$ from LEXO-$(\ell-1)$. Indeed, 
	\begin{align}
	P(r_{t-\ell} | \bm{x}_{1:t})  = \sum_{r_{t-\ell+1}} P(r_{t-\ell} | r_{t-\ell+1}, \bm{x}_{1:t}) P(r_{t-\ell+1} |\bm{x}_{1:t}).
	\label{eq: LEXO}
	\end{align}	
	where $r_{t-\ell+1} \in \{0, r_{t-\ell}+1\}$. The second term in \eqref{eq: LEXO} is obtained from computation of LEXO-$(\ell-1)$ at time $t-\ell+1$. For the first term, if $r_{t-\ell+1} \neq 0$, then $P(r_{t-\ell} = r_{t-\ell+1}-1 \vert r_{t-\ell+1}, \bm{x}_{1:t}) = 1$. If $r_{t-\ell+1}=0$, then we start a new regime at $t-\ell+1$, so $P(\bm{x}_{(t-\ell+1):t} | r_{t-\ell+1}=0, \bm{x}_{1:t})$
	does not depend on $r_{t-\ell}$. Then we have 
	\begin{align*}
	P(r_{t-\ell} | r_{t-\ell+1}=0, \bm{x}_{1:{t}}) \propto P(r_{t-\ell}, \bm{x}_{1:t-\ell})  \propto P(r_{t-\ell} | \bm{x}_{1:t-\ell}),
	\end{align*} which is exactly the EXO algorithm at time $t-\ell$. Putting everything together, we then have
	\begin{align*}
	P(r_{t-\ell} = r  | \bm{x}_{1:{t}}) = P(r_{t-\ell+1} = r+1 | \bm{x}_{1:{t}}) + P(r_{t-\ell} = r | \bm{x}_{1:t-\ell}) P(r_{t-\ell+1}=0 | \bm{x}_{1:{t}}), 
	\end{align*}
	for $r=0,\ldots, t-\ell-1$. The above proof also shows that, for any $\ell\geq 1$ the distribution of run length from LEXO-$\ell$ algorithm can be computed recursively, starting from EXO. Note that going from lag $\ell-1$ to lag $\ell$ requires exactly one computation of complexity $\mathcal{O}(t)$, so the entire procedure is $\mathcal{O}(\ell t)$.
\end{proof}
\section{Parameter Estimation}

Another natural question while detecting changepoints is to estimate the parameter $\bm{\eta}_t$ at each time $t$. In this section, we consider how to find the posterior distribution of these parameters from LEXO-$\ell$.  First we start with EXO $(\ell=0)$,  
\[
P(\bm{\eta}_t \vert \bm{x}_{1:t}) = \sum_{r_t} P(\bm{\eta}_t | r_t, \bm{x}_{1:t}) P(r_t | \bm{x}_{1:t}),
\]
where the second term $P(r_t | \bm{x}_{1:t})$ is computed from EXO algorithm. For the first term, given $r_t=r$, this is the posterior distribution of $\bm{\eta}_t$ computed from $\bm{x}_t^{(r)}$, i.e, as if only $r$ independent observations from the most recent regime are observed.

For LEXO-$\ell$, the parameter estimate process is more involved. The theorem below specifies that the posterior distribution of $\bm{\eta}_t$ can be computed recursively. 
\begin{theorem}
	Given a sequence of data $\{x_t\}_{t=1}^{\infty}$ follows a product partition model, the posterior distribution of $\bm{\eta}_t$ from LEXO-$\ell$ is a mixture of mixture models and can be computed exactly and recursively for all $t$ and $\ell$.
\end{theorem}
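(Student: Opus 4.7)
My plan is to mirror the recursion used for the run length in Theorem 1, replacing the run-length variable by the regime parameter. The base case $\ell = 0$ is immediate from the EXO output: marginalising over $r_t$ gives
\begin{equation*}
P(\bm{\eta}_t \vert \bm{x}_{1:t}) = \sum_{r_t = 0}^{t-1} P\!\left(\bm{\eta}_t \vert r_t, \bm{x}_t^{(r_t)}\right) P(r_t \vert \bm{x}_{1:t}),
\end{equation*}
which is a finite mixture of conjugate posteriors, with components indexed by the sufficient statistics $(\bm{\chi}_t^{(r_t)}, \nu_t^{(r_t)})$ that EXO already maintains and with mixing weights from the EXO run-length posterior.

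For the inductive step I would condition on $r_{t-\ell+1}$, the same splitting variable used in Theorem 1:
\begin{equation*}
P(\bm{\eta}_{t-\ell} \vert \bm{x}_{1:t}) = \sum_{r_{t-\ell+1}} P(\bm{\eta}_{t-\ell} \vert r_{t-\ell+1}, \bm{x}_{1:t})\, P(r_{t-\ell+1} \vert \bm{x}_{1:t}),
\end{equation*}
where the second factor is precisely the LEXO-$(\ell-1)$ run-length posterior at time $t-\ell+1$ produced by Theorem 1. The first factor splits according to the two admissible values $r_{t-\ell+1} \in \{0, r_{t-\ell}+1\}$. If $r_{t-\ell+1} \neq 0$, times $t-\ell$ and $t-\ell+1$ belong to the same regime of the product partition model, so $\bm{\eta}_{t-\ell} = \bm{\eta}_{t-\ell+1}$ and
\begin{equation*}
P(\bm{\eta}_{t-\ell} \vert r_{t-\ell+1}, \bm{x}_{1:t}) = P(\bm{\eta}_{t-\ell+1} \vert r_{t-\ell+1}, \bm{x}_{1:t}),
\end{equation*}
which is exactly a conditional component of the LEXO-$(\ell-1)$ mixture already computed at time $t-\ell+1$. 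If $r_{t-\ell+1} = 0$, a new regime begins at $t-\ell+1$, so under the product partition model $\bm{x}_{(t-\ell+1):t}$ is conditionally independent of $\bm{\eta}_{t-\ell}$ given this event, yielding $P(\bm{\eta}_{t-\ell} \vert r_{t-\ell+1} = 0, \bm{x}_{1:t}) = P(\bm{\eta}_{t-\ell} \vert \bm{x}_{1:t-\ell})$, the EXO parameter posterior at time $t-\ell$.

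Combining the two cases, the LEXO-$\ell$ parameter posterior is a mixture over $r_{t-\ell+1}$ whose $r_{t-\ell+1} = 0$ component is the EXO mixture at time $t-\ell$, and whose remaining components are LEXO-$(\ell-1)$ conditional components, which are themselves mixtures by the inductive hypothesis. Hence the posterior is indeed a mixture of mixtures, and it can be maintained exactly and recursively because each lag adds only a constant amount of work per value of $r_{t-\ell+1}$, giving the same $\mathcal{O}(\ell t)$ budget as Theorem 1. The main obstacle I anticipate is the careful justification of the conditional-independence step used when $r_{t-\ell+1}=0$; once that is spelled out from the product partition assumption that each regime's parameter depends only on observations within that regime, the recursion and the mixture-of-mixtures characterisation follow by induction on $\ell$.
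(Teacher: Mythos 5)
Your two case analyses---a changepoint at $t-\ell+1$ renders $\bm{x}_{(t-\ell+1):t}$ irrelevant to $\bm{\eta}_{t-\ell}$, recovering the EXO posterior at time $t-\ell$, while continuation forces $\bm{\eta}_{t-\ell}=\bm{\eta}_{t-\ell+1}$---are exactly the two facts the paper uses, and the conditional-independence step you flag is justified the same way as in Theorem 1 (constant hazard). The genuine gap is in how you organize the recursion. You recurse on the marginal posterior $P(\bm{\eta}_{t-\ell}\mid\bm{x}_{1:t})$, but the right-hand side of your inductive step requires the run-length-conditional distributions $P(\bm{\eta}_{t-\ell+1}\mid r_{t-\ell+1},\bm{x}_{1:t})$. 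For $\ell-1=0$ these are just conjugate posteriors built from EXO's sufficient statistics, but for $\ell-1\geq 1$ they are not: data after $t-\ell+1$ remain informative about $\bm{\eta}_{t-\ell+1}$ whenever no later changepoint intervenes, so these conditional components are nontrivial objects that your recursion never produces. Your inductive hypothesis concerns only the marginal mixture, and the quantity you need, $\sum_{r\geq 1}P(r_{t-\ell+1}=r\mid\bm{x}_{1:t})\,P(\bm{\eta}_{t-\ell+1}\mid r_{t-\ell+1}=r,\bm{x}_{1:t})$, cannot be recovered from the marginal alone (it omits the $r_{t-\ell+1}=0$ component), so the induction does not close as stated. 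Relatedly, saying the sum over $r_{t-\ell+1}$ has ``two admissible values'' conflates your unconditional marginalization with one conditional on $r_{t-\ell}$; unconditionally $r_{t-\ell+1}$ ranges over $0,\ldots,t-\ell$.

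The paper closes the recursion at the level of these conditional components: writing $\bm{\eta}_{t-\ell}^{(\ell)}$ for $P(\bm{\eta}_{t-\ell}\mid r_{t-\ell},\bm{x}_{1:t})$, it conditions further on $r_{t-\ell+1}\in\{0,r_{t-\ell}+1\}$ (now genuinely two values given $r_{t-\ell}$) and shows $\bm{\eta}_{t-\ell}^{(\ell)}=\alpha\,\bm{\eta}_{t-\ell}^{(0)}+(1-\alpha)\,\bm{\eta}_{t-\ell+1}^{(\ell-1)}$, with the weight $\alpha$ computed explicitly from the EXO and LEXO run-length posteriors; the marginal is then obtained by mixing these components with the LEXO-$\ell$ weights $P(r_{t-\ell}\mid\bm{x}_{1:t})$. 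That per-component recursion is what yields the ``mixture of mixtures'' structure and the explicit moment updates. Your argument is repaired by taking exactly these conditional components, rather than the marginal posterior, as the inductive object.
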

\begin{proof}
	For any lag $\ell \geq 1$, we can write
	\begin{equation}
	P(\bm{\eta}_{t-\ell} | \bm{x}_{1:t}) = \sum_{r_{t-\ell}} P(\bm{\eta}_{t-\ell} | r_{t-\ell}, \bm{x}_{1:t}) P(r_{t-\ell} \vert \bm{x}_{1:t}),
	\label{eq:lexo_l}
	\end{equation}
	so the posterior distribution of $\bm{\eta}_{t-\ell}$ is a mixture of components, $P(\bm{\eta}_{t-\ell} | r_{t-\ell}, \bm{x}_{1:t})$, which are weighted by the posterior distribution of run length from LEXO-$\ell$. We show below that, each component $P(\bm{\eta}_{t-\ell} | r_{t-\ell}, \bm{x}_{1:t})$ is also a mixture of two other distributions. Indeed, we have
	\begin{equation}
	P(\bm{\eta}_{t-\ell} | r_{t-\ell}, \bm{x}_{1:t}) = \sum_{r_{t-\ell+1}} P(\bm{\eta}_{t-\ell} | r_{t-\ell+1}, r_{t-\ell}, \bm{x}_{1:t}) P(r_{t-\ell+1} | r_{t-\ell}, \bm{x}_{1:t}),
	\label{eq: first_term}
	\end{equation}
	where $r_{t-\ell+1} \in \{0 , r_{t-\ell} + 1\}$. For the first term in the sum of \eqref{eq: first_term}, if $r_{t-\ell+1} = 0$ then a new regime begins at $t-\ell+1$, which implies $\bm{\eta}_{t-\ell}$ is independent of $\bm{x}_{(t-\ell+1):(t)}$. Denote $ P(\bm{\eta}_{t-\ell} | r_{t-\ell}, \bm{x}_{1:t}) = \bm{\eta}_{t-\ell}^{(\ell)}$, where the superscript $(\ell)$ in $\bm\eta_{t-\ell}^{(\ell)}$ indicates that $\bm\eta_{t-\ell}$ is estimated with $\ell$ more observations after time $t-\ell$. We then have 
	\begin{align*}
	P(\bm{\eta}_{t-\ell} | r_{t-\ell+1}=0, r_{t-\ell}, \bm{x}_{1:t}) = P(\bm{\eta}_{t-\ell} | r_{t-\ell}, \bm{x}_{1:t-\ell}) = \bm{\eta}_{t-\ell}^{(0)},
	\end{align*}
	which is the parameter estimate from EXO at time $t-\ell$. If $r_{t-\ell+1} = r_{t-\ell} +1 > 0$ the regime continues and $x_{t-\ell}$ and $x_{t-\ell+1}$ is from the same regime; this implies 
	\begin{align*}
	P(\bm{\eta}_{t-\ell} \vert r_{t-\ell+1}=r_{t-\ell}+1, r_{t-\ell}, \bm{x}_{1:t}) = P(\bm{\eta}_{t-\ell+1} \vert r_{t-\ell+1}, \bm{x}_{1:{t}}) = \bm{\eta}_{t-\ell+1}^{(\ell-1)}, 
	\end{align*}  
	which is the estimate at time ${t-(\ell-1)}$ with $\ell-1$ more observations. For the second term in the sum of \eqref{eq: first_term}, the chain rule gives
	$P(r_{t-\ell+1} | r_{t-\ell}, \bm{x}_{1:t}) \propto P(r_{t-\ell} | r_{t-\ell+1}, \bm{x}_{1:t}) P(r_{t-\ell+1} \vert \bm{x}_{1:t})
	$. If $r_{t-\ell+1}=0$, then by the previous argument, $P(r_{t-\ell}|r_{t-\ell+1}=0, \bm{x}_{1:t}) = P(r_{t-\ell} | \bm{x}_{t:t-\ell}).$ If $r_{t-\ell+1} \neq 0$, then $P(r_{t-\ell}= r_{t-\ell+1}-1 |r_{t-\ell+1}, \bm{x}_{1:t})=1.$ Therefore, 
	\begin{align*}
	 P(r_{t-\ell+1} = 0 | r_{t-\ell}, \bm{x}_{1:t} )  \propto
	P(r_{t-\ell} | \bm{x}_{1:t-\ell}) P(r_{t-\ell+1}=0 \vert \bm{x}_{1:t}) \triangleq \alpha_1,
	\end{align*}
	and 
	\begin{align*}
	P(r_{t-\ell+1} = r_{t-\ell} + 1 | r_{t-\ell}, \bm{x}_{1:t} ) \propto
	P(r_{t-\ell+1}=r_{t-\ell}+1| \bm{x}_{1:t}) \triangleq \alpha_2.
	\end{align*}
	Letting $\alpha = \alpha_1/(\alpha_1 + \alpha_2)$, then we now have an explicit form for the distribution of $\bm{\eta}_{t}^{(\ell)}$ as
	\[ \bm{\eta}_{t-\ell}^{(\ell)} = \alpha \bm{\eta}_{t-\ell}^{(0)}  + (1-\alpha) \bm{\eta}_{t-\ell+1}^{(\ell-1)}.
	\]
	This shows that  $\bm{\eta}_{t-\ell}^{(\ell)}$ is itself a mixture of two distributions: 1) the posterior distribution for $\bm{\eta}_t$ from EXO at time $t-\ell$ and 2) the posterior distribution for $\bm{\eta}_{t-\ell+1}$ from LEXO-$(\ell-1)$. Therefore, the distribution of $\bm{\eta}_{t-\ell}^{(\ell)}$ and $P(\bm{\eta}_{t-\ell} | \bm{x}_{t:t})$ can be computed recursively. 
\end{proof}
Applying the properties of mixture model, we can compute the posterior $K^{th}$ moment of $P(\bm{\eta}_{t-\ell} | \bm{x}_{1:t})$ in the two following stages:
\begin{itemize}
	\item Stage 1: Compute $\mathbb{E}\left(\left[\bm{\eta}_{t-\ell}^{(\ell)}\right]^K\right)$ recursively as
	\[
	\mathbb{E}\left(\left[\bm{\eta}_{t-\ell}^{(\ell)}\right]^K\right) = \alpha \mathbb{E}\left(\left[\bm{\eta}_{t-\ell}^{(0)}\right]^K\right) + (1-\alpha) \mathbb{E}\left(\left[\bm{\eta}_{t-\ell+1}^{(\ell-1)}\right]^K \right). \]
	\item Stage 2: Compute the posterior moment:
	\[
	\mathbb{E}\left(\left[\bm{\eta}_{t-\ell} \vert \bm{x}_{1:t}\right]^K\right) = \sum_{r_{t-\ell}} \mathbb{E}\left(\left[\bm{\eta}_{t-\ell}^{(\ell)}\right]^K\right) P(r_{t-\ell}|\bm{x}_{1:t}).
	\]
\end{itemize}

\section{Simulation}

To demonstrate the effect of incorporating lags into the online framework of EXO, we investigate LEXO for changepoint detection under three different settings -- a Normal distributed model with mean shift, a Normal distributed model with precision (inverse variance) shift, and a Poisson distributed model (both mean and variance shift).  Formally these settings can be described in the same order as the following:

\begin{enumerate}
	\item $X_{\tau(t)} \sim N(\mu_{\tau(t)} , 1)$ where $\mu_{1} = 0$ and changes such that $\mu_{\tau(t)} = \mu_{\tau(t) - 1} + 1$.
	\item $X_{\tau(t)} \sim N(0 , 1/\xi_{\tau(t)}^2)$ where $\xi_{1} = 16$ and changes such that $\xi_{\tau(t)} = (1/4)\xi_{\tau(t) - 1}$.
	\item $X_{\tau(t)} \sim \text{Poisson}(\lambda_{\tau(t)})$ where $\lambda_{1} = 1$ and changes such that $\lambda_{\tau(t)} = 4 + \lambda_{\tau(t) -1}$.
\end{enumerate}

The Normal mean and precision shift represent common applications of an online streaming scenario.  The Poisson setting illustrates a common counting problem that can be difficult to detect due to the increasing variability of the model. An example of each of the three processes can be seen in Figure \ref{figure:SimulationExampleProcess}.  For all settings, we impose 5 equally-spaced changepoints; for each setting, 1000 samples were generated. 

\begin{figure}[H]
	\centering
	\makebox[\textwidth][c]{
		\begin{tabular}{c c c}
			
			\begin{subfigure}{.33\textwidth}
				\includegraphics[page = 1 , height = 5cm , trim = {0 1.1cm 0 0cm}, clip]{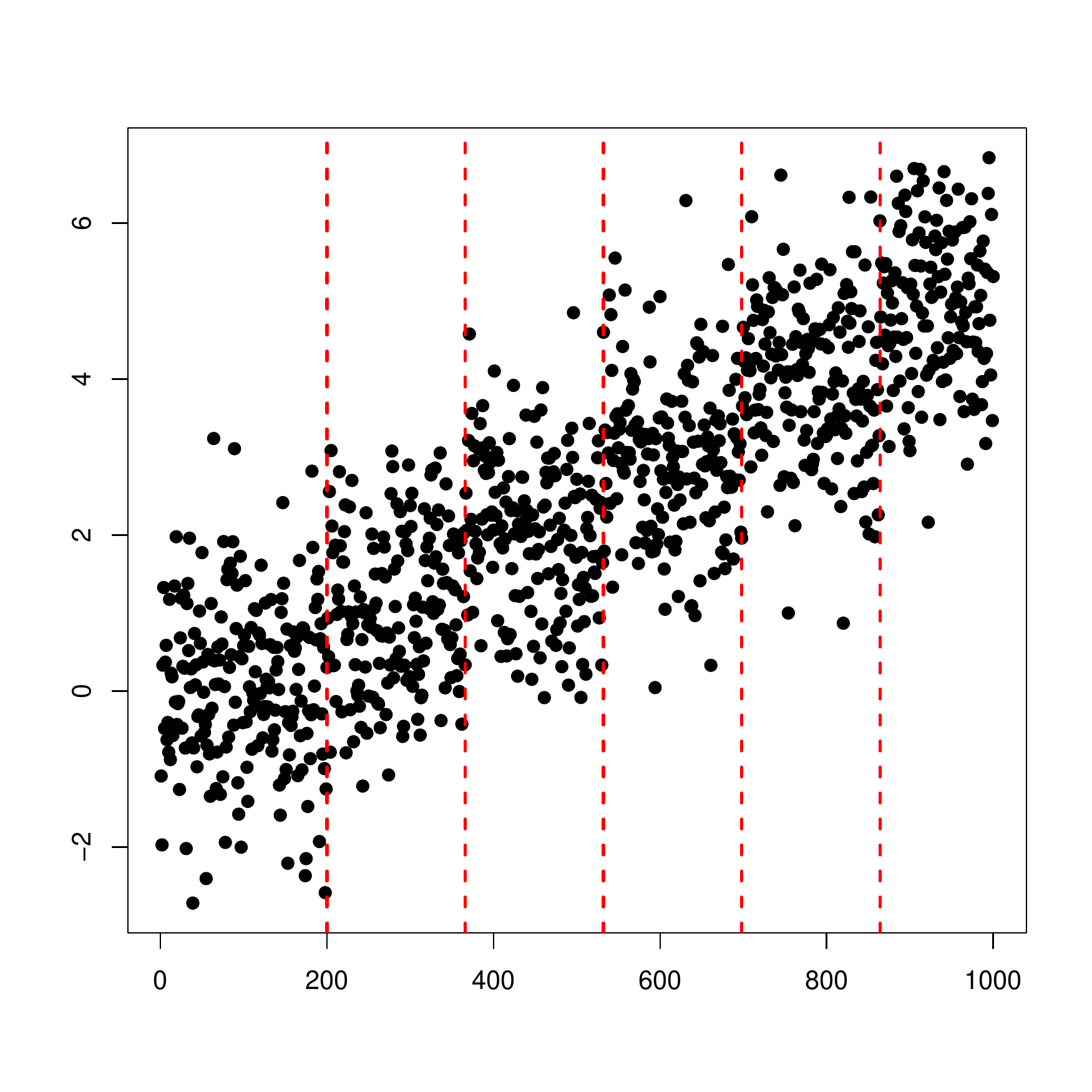}
				\caption{Mean Shift}
			\end{subfigure} &
			
			\begin{subfigure}{.33\textwidth}
				\includegraphics[page = 2 , height = 5cm , trim = {0 1.1cm 0 0cm}, clip]{SampleProcesses.pdf}
				\caption{Precision Shift}
			\end{subfigure} & 
			
			\begin{subfigure}{.33\textwidth}
				\includegraphics[page = 3 , height = 5cm , trim = {0 1.1cm 0 0cm}, clip]{SampleProcesses.pdf}
				\caption{Poisson}
			\end{subfigure}
			
		\end{tabular}
	}
	
	\caption{Random draw of each of the three simulated processes considered in the simulation studies.}
	\label{figure:SimulationExampleProcess}
	
\end{figure}

For each sample, we investigate the performance of the run length estimates and the parameter estimates made by EXO and LEXO-$\ell$, for $\ell = 1, \dots, 30$. The hyperparameters for the prior were chosen to be non-informative, and the hazard rate was chosen to be $H=1/50$.  The difference in estimated run length is inspected from the 1000 samples' median of maximum-a-posteriori (MAP) of the posterior distribution of the run length, which is plotted at each time point. This plot illustrates a common choice of determining a changepoint made by practitioners, see \cite{adams2007bayesian}.  Additionally, the parameter estimates are compared at each time point across simulations by calculating the ratio of posterior MSE of EXO and LEXO. At time $t-\ell$, given the true parameter $\bm\eta_t$, the posterior MSE of LEXO$-\ell$  is defined as
\[ \text{MSE}_t^{(\ell)} = \left(\mathbb{E}\left(\bm\eta_{t-\ell}|\bm{x}_{1:t}\right)-\bm\eta_t\right)^2 + \mathrm{Var}\left(\bm\eta_{t-\ell}|\bm{x}_{1:t}\right)
\]
Note that, LEXO is performing better than EXO in terms of parameter estimation when the ratio is greater than 1.  The average posterior mean across 1000 samples of each parameter is also plotted at each time point.

Regarding the run length distribution of EXO and LEXO, the run length MAP plots can be seen in Figure \ref{figure:SimulationRunLength}.  From each of the processes, a clear shift can be observed from LEXO, correcting overstepping from EXO's run length MAP.  Additionally, the shift converges toward the true changepoint in the process when allowed enough lags to refine the uncertainty in the estimate.  It seems evident that LEXO is refining EXO appropriately, not over correcting past where a process changed.

\begin{figure}[H]
	
	\centering
	\makebox[\textwidth][c]{
		\begin{tabular}{c}
			
			\begin{subfigure}{\textwidth}
				\includegraphics[page = 1 , trim = {.5cm 12.8cm .5cm 1cm}, clip , width = 17cm]{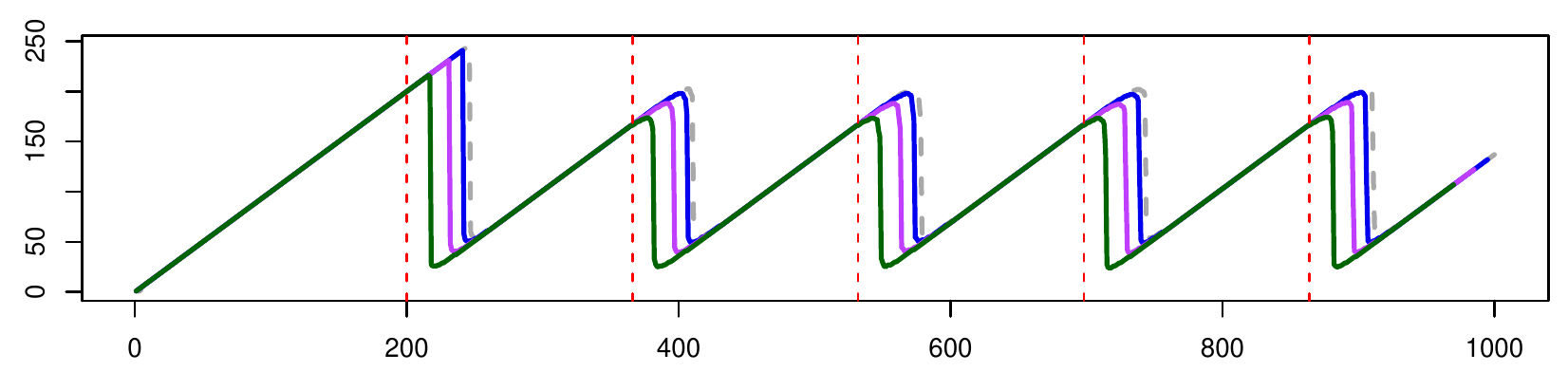}
				\caption{Mean Shift}
			\end{subfigure} \\
			
			\begin{subfigure}{\textwidth}
				\includegraphics[page = 1 , trim = {.5cm 12.8cm .5cm 1cm}, clip , width = 17cm]{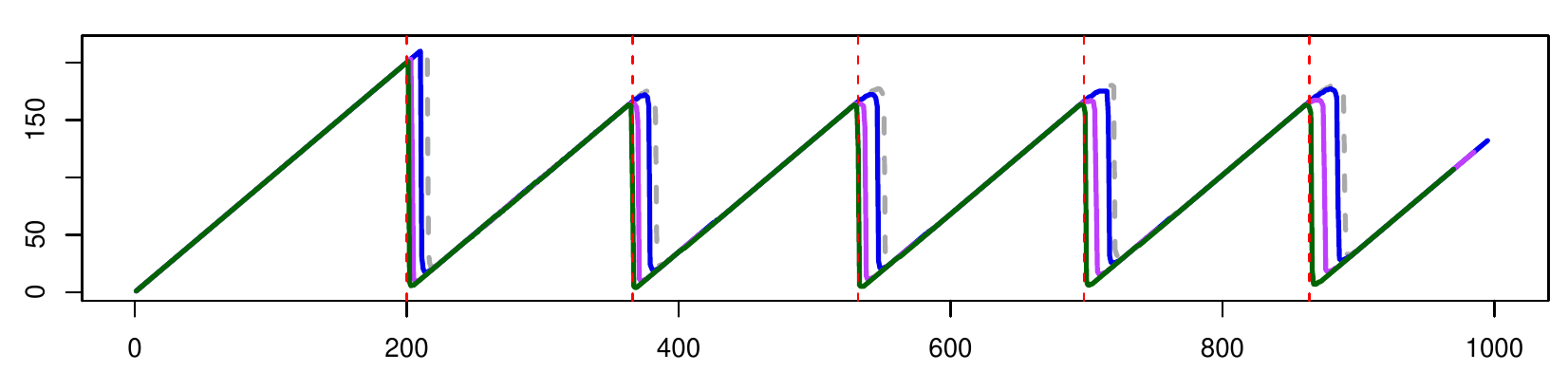}
				\caption{Precision Shift}
			\end{subfigure} \\
			
			\begin{subfigure}{\textwidth}
				\includegraphics[page = 1 , trim = {.5cm 12.8cm .5cm 1cm}, clip , width = 17cm]{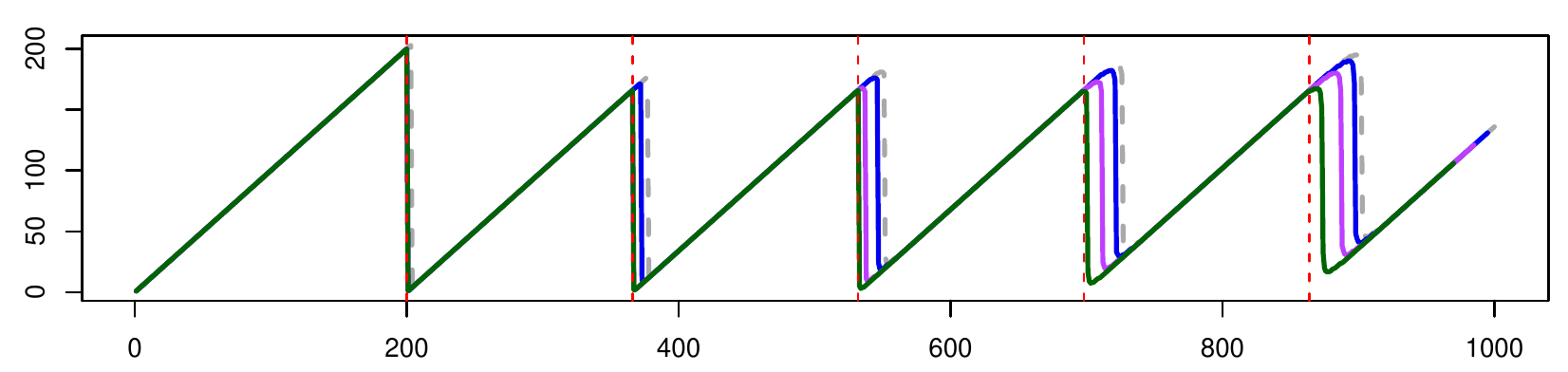}
				\caption{Poisson}
			\end{subfigure}
			
		\end{tabular}
	}
	
	\caption{Median run length, as indicated by the MAP, from 1000 simulations of each process, from EXO (dashed gray line) and LEXO$-\ell$, for $\ell=5$ (solid blue line), $\ell=15$ (solid  purple line), and $\ell=30$ (solid green line).}
	\label{figure:SimulationRunLength}
	
\end{figure}

Inspecting the parameter estimates also shows clear performance gains from LEXO.  Beginning with the MSE ratio plots, observed in the left column of Figure \ref{fig: MSEandMean}, a substantial improvement of the estimates is apparent.  Each of the three processes show an improvement in terms of the MSE ratio, where it is not uncommon to note more than five times better performance. Noticeably, for the precision shift model, the MSE ratio can be up to $1.5\times 10^7$ for LEXO-30. This can be attributed to more data being incorporated at each point from future points, as well as more concentration around the correct run length in the estimated run length distribution. 

Next, the average mean plots are presented in the right column of Figure \ref{fig: MSEandMean}.  The improvement of the parameter estimate is self evident from the plots, where the estimate gets closer and closer to the truth.  EXO's estimate of the precision shift model is so poor that it does not completely appear on the plot with reasonable dimensions.  However, after just incorporating 5 lags, LEXO begins to closely follow the true changes in the model.  The Poisson setting follows in a similar fashion to the mean shift model.

Finally, Table \ref{table:MSE} illustrates the average MSE ratio of EXO over some LEXO-$\ell$ at three time points, $t = $ 197, 200, and 220, corresponding to right before, at, and a fair bit after a changepoint. Overall most of the MSE ratios are greater than $1$; especially for the normal precision shift model, the ratio is much bigger than $1$. This shows considerable benefit of LEXO over EXO. However, right before a changepoint ($t=197$), adding a large number of lags can have a detrimental effect on the performance of the estimates, when the MSE ratio falls below $1$. This occurs because a large number of lags include too many points from the next regime.  However, the detection of the regime changes does not deteriorate with higher lags.  A reasonable solution is to use as higher order a lag as possible for the detection of changepoints, while using a smaller lag for moment calculations.



\begin{figure}[H]

	\centering
	\makebox[\textwidth][c]{
		\begin{tabular}{c}

			\begin{subfigure}{\textwidth}
				\begin{tabular}{c c}
					\includegraphics[page = 1 , height = 6.3cm, trim = {0 1.5cm 0 1.5cm}, clip]{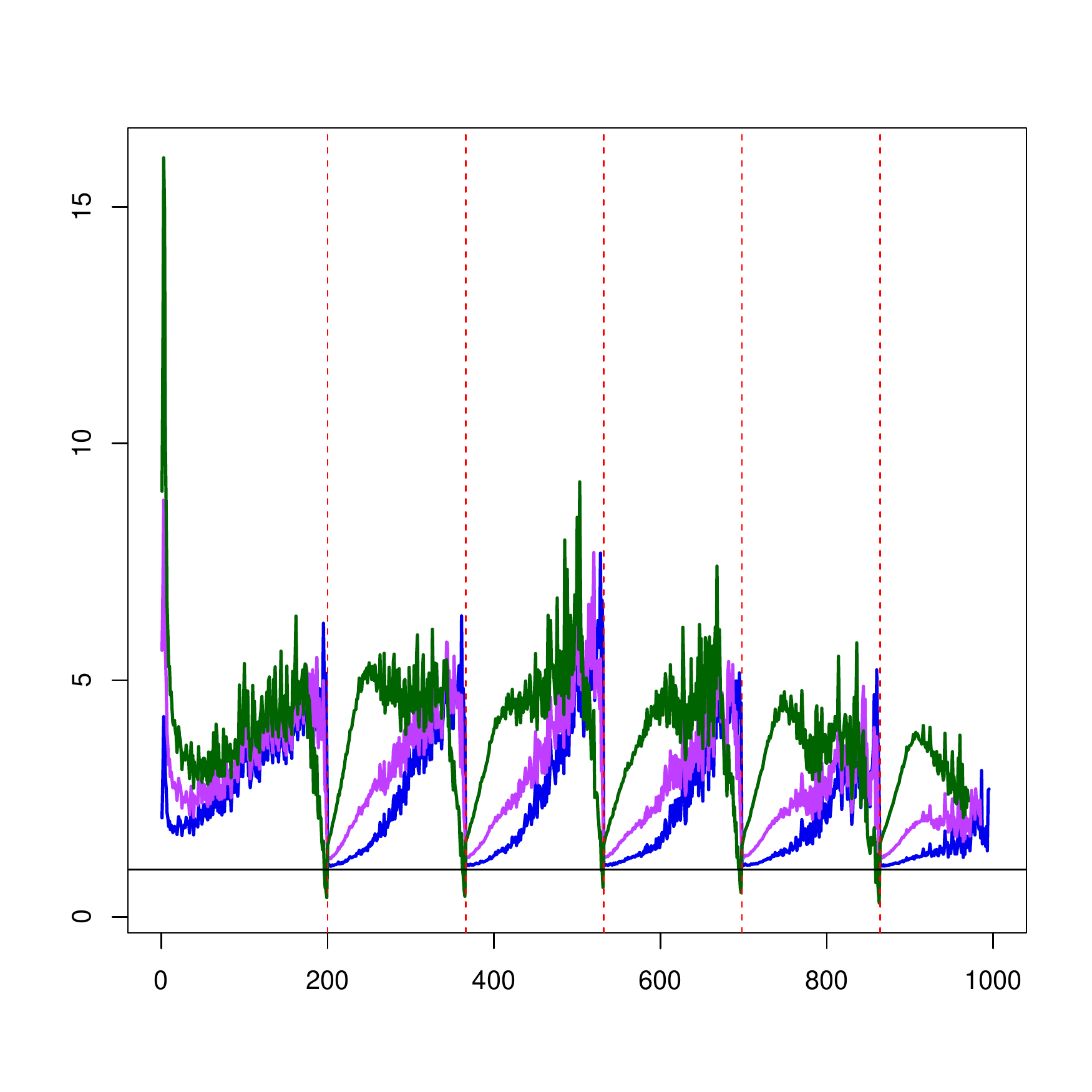} & 
					\includegraphics[page = 1 , height = 6.3cm, trim = {0 1.5cm 0 1.5cm}, clip]{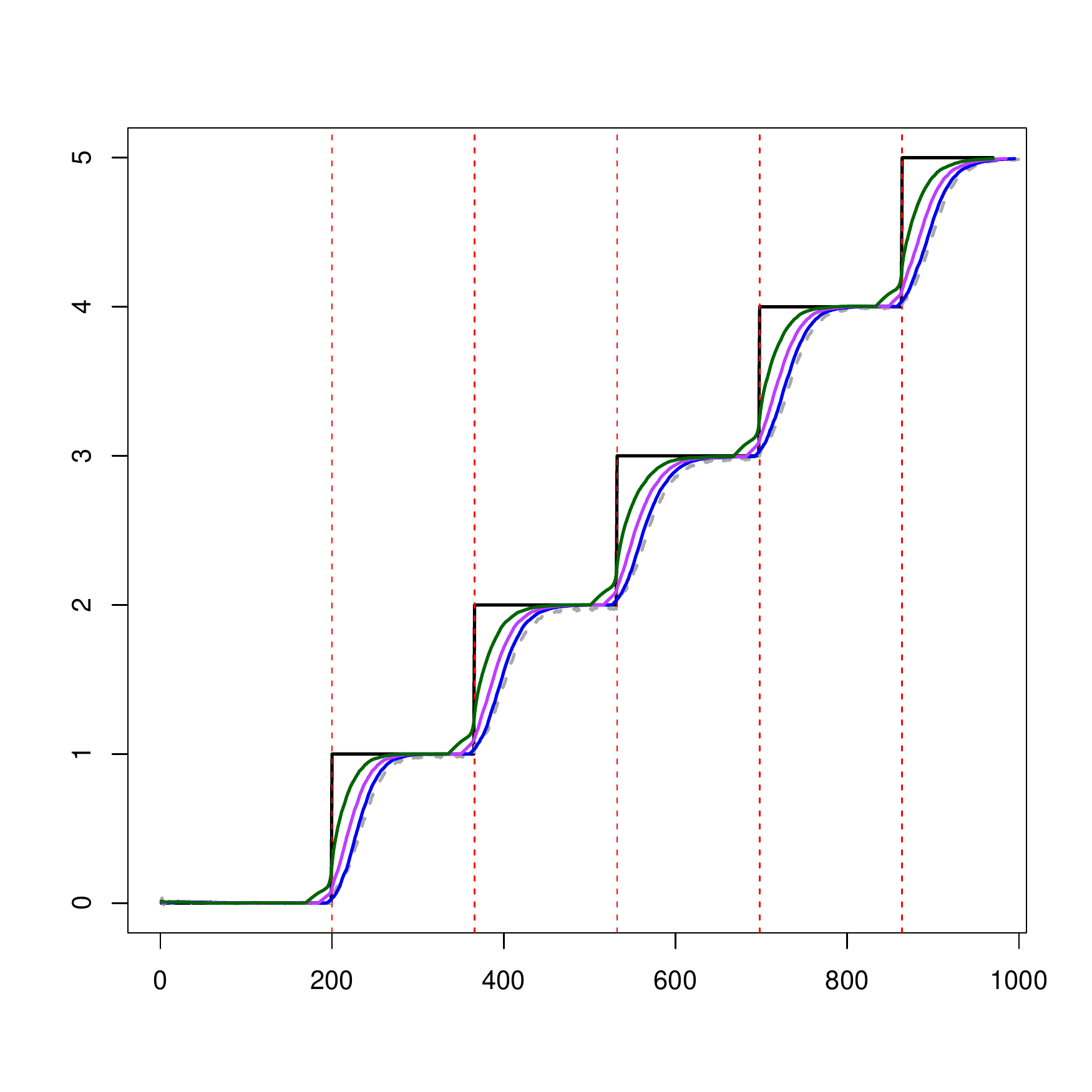}
				\end{tabular}
				\caption{Mean Shift}
			\end{subfigure} \\

			\begin{subfigure}{\textwidth}
				\begin{tabular}{c c}
					\includegraphics[page = 1 , height = 6.3cm, trim = {0 1.5cm 0 1.5cm}, clip]{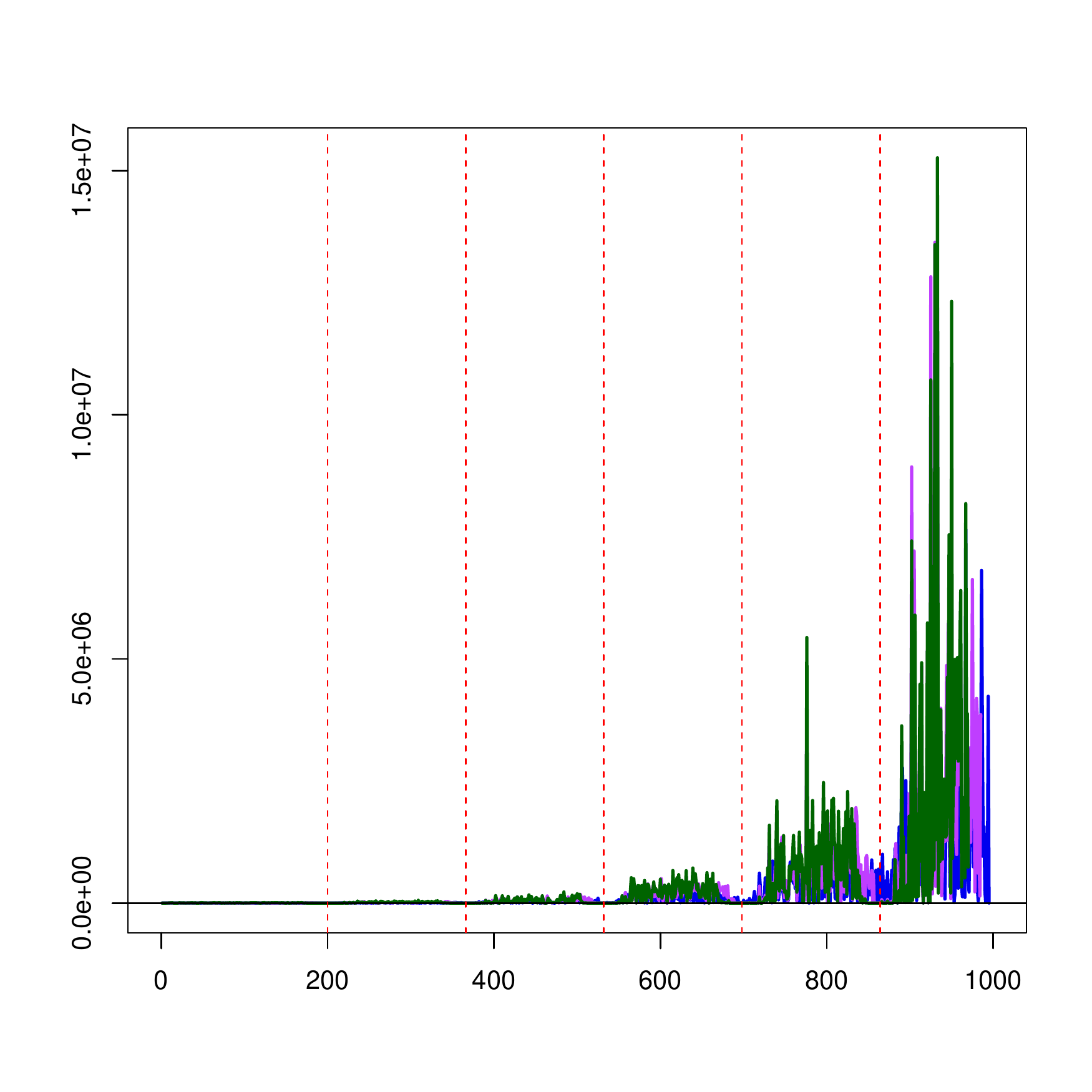} & 
					\includegraphics[page = 1 , height = 6.3cm, trim = {0 1.5cm 0 1.5cm}, clip]{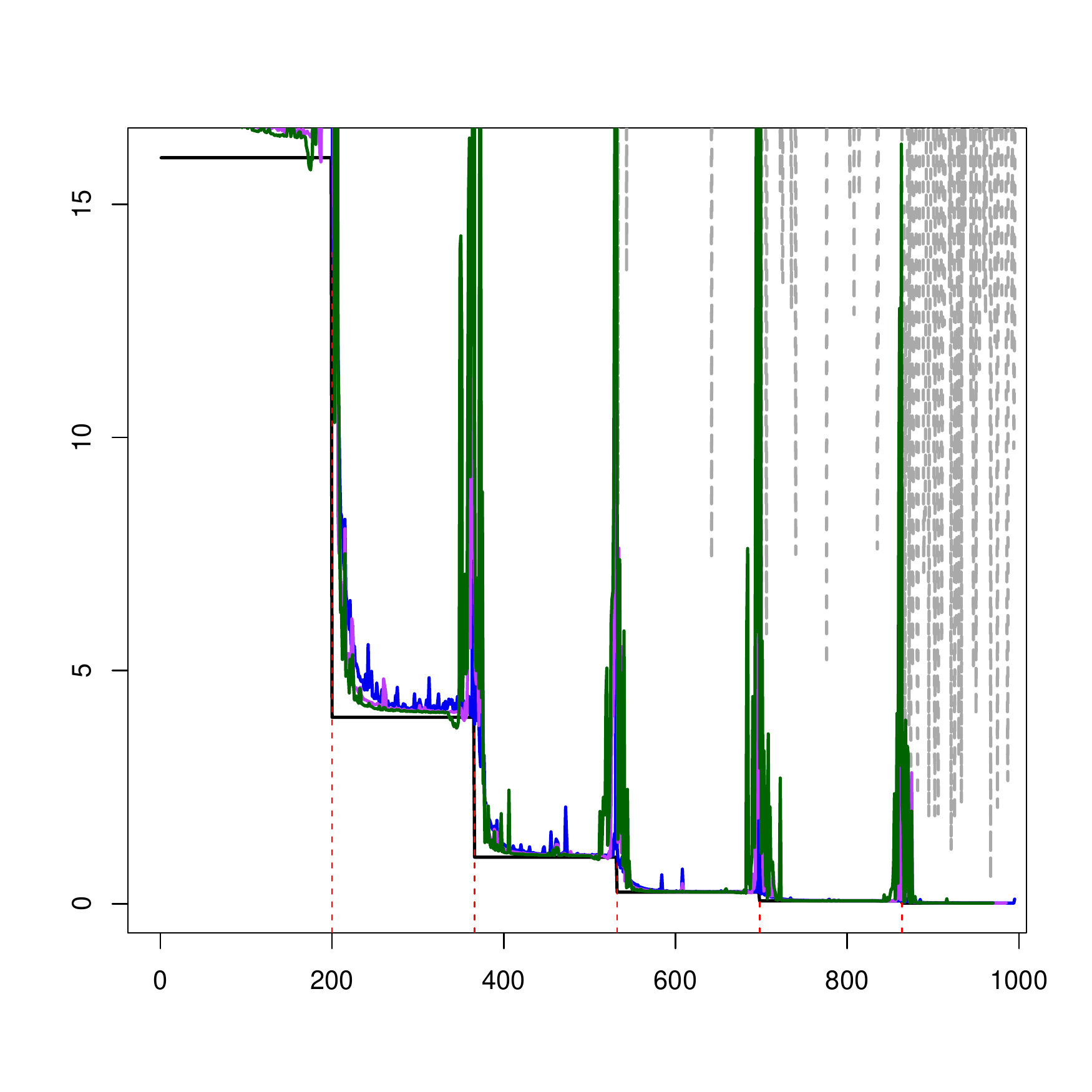}
				\end{tabular}
				\caption{Precision Shift}
			\end{subfigure} \\

			\begin{subfigure}{\textwidth}
				\begin{tabular}{c c}
					\includegraphics[page = 1 , height = 6.3cm, trim = {0 1.5cm 0 1.5cm}, clip]{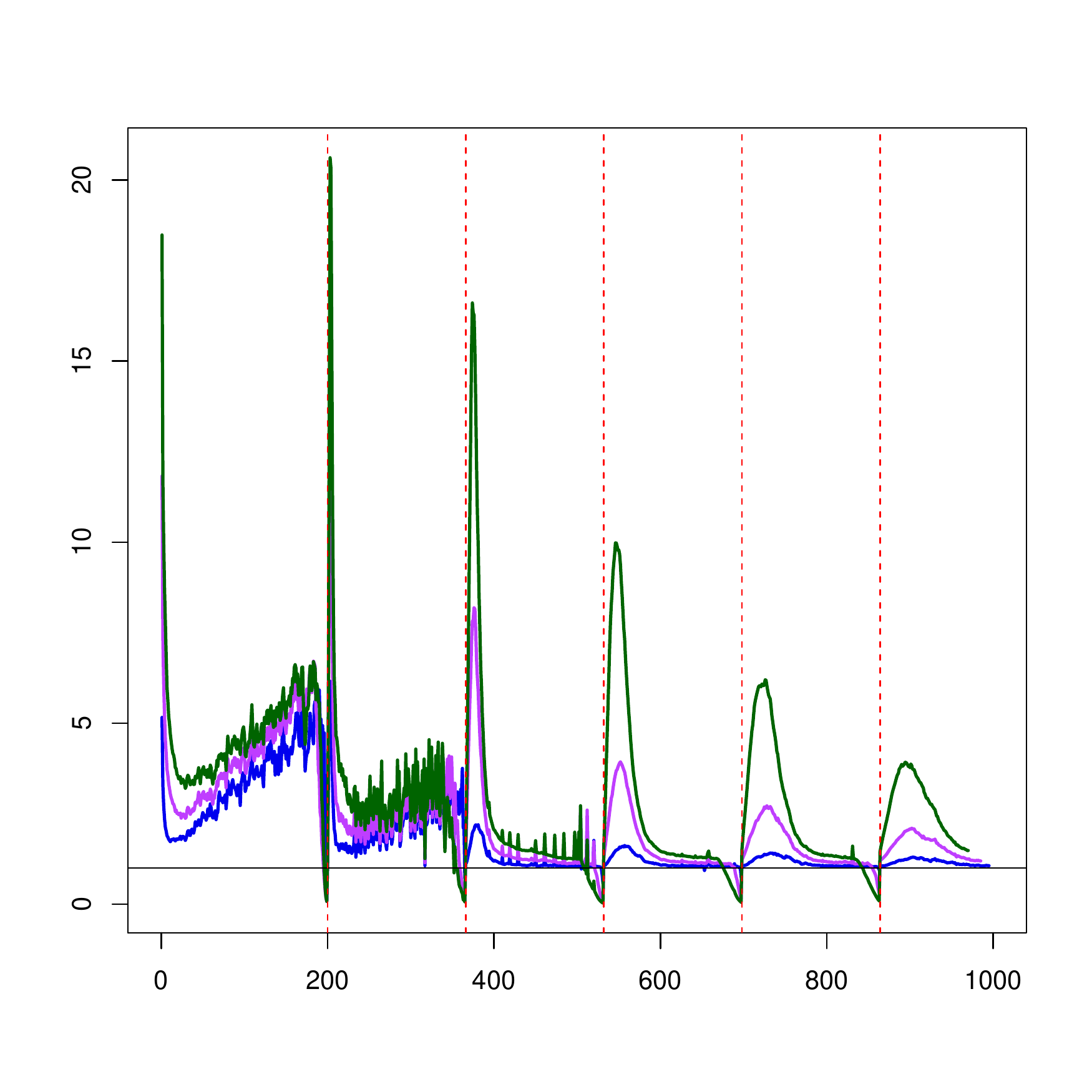} & 
					\includegraphics[page = 1 , height = 6.3cm, trim = {0 1.5cm 0 1.5cm}, clip]{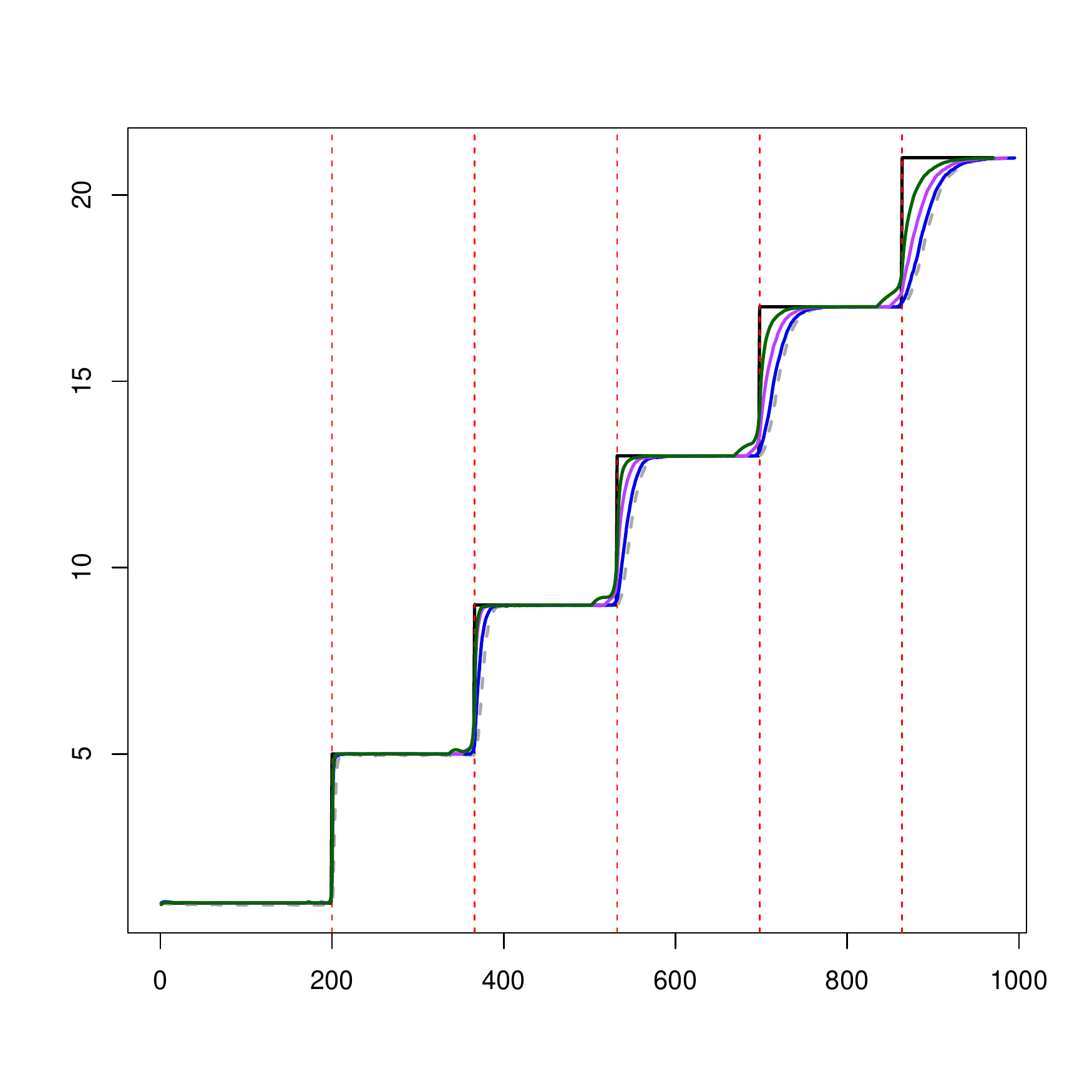}
				\end{tabular}
				\caption{Poisson}
			\end{subfigure} 

		\end{tabular}
	}
	\caption{The left plots represent the ratio of average posterior MSE of EXO vs LEXO-$\ell$ at each time point, $\mathrm{MSE}_t^{(0)}/\mathrm{MSE}_t^{(\ell)}$, while the right plots show average posterior mean across 1000 simulations for EXO (dashed gray line), $\ell=5$ (solid blue line), $\ell=15$ (solid purple line), and $\ell=30$ (solid green line), along with the true value of the parameter (solid black line).}
	\label{fig: MSEandMean}
\end{figure}

\begin{table*}[t]
	\centering
	\caption{MSE ratio of EXO over LEXO-$\ell$ given at $t=197$ (right before a changepoint), $t=200$ (at a changepoint), and $t=220$ (in the middle of a regime)}
	\begin{tabular}{lccccccccc}
		\hline
		Type & Time & \multicolumn{8}{c}{MSE ratio of EXO over LEXO-$\ell$} \\
		\cline{3-10}
		& & 1 & 2 & 3 & 4 & 5 & 10 & 15 & 30 \\ 
		\hline
		\multirow{3}{3cm}{Mean Shift} & 197 & 2.025 & 2.915 & 3.763 & 4.079 & 4.079 & 3.675 & 2.667 & 0.618 \\ 
		& 200 & 1.052 & 1.069 & 1.077 & 1.084 & 1.099 & 1.154 & 1.223 & 1.514 \\ 
		& 220 & 1.050 & 1.072 & 1.100 & 1.125 & 1.155 & 1.349 & 1.606 & 3.148 \\ 
		\hline
		\multirow{3}{3cm}{Precision Shift} & 197& 5.083 & 228.355 & 30.520 & 40.400 & 37.375 & 7.354 & 4.956 & 3.398 \\ 
		& 200& 36.372 & 63.331 & 56.260 & 33.187 & 37.700 & 11.574 & 8.221 & 4.972 \\ 
		& 220 & 37.096 & 179.673 & 3906 & 2024 & 10760 & 14027 & 13469& 14063 \\ 
		\hline
		\multirow{3}{3cm}{Poisson} & 197& 1.341 & 2.046 & 2.463 & 4.528 & 1.615 & 0.363 & 0.384 & 0.469 \\ 
		& 200& 1.045 & 1.074 & 1.123 & 1.173 & 1.212 & 1.245 & 1.239 & 1.232 \\ 
		& 220& 1.371 & 1.423 & 1.494 & 1.542 & 1.613 & 1.979 & 2.326 & 3.530 \\ 
		\hline
	\end{tabular}
	\label{table:MSE}
\end{table*}

\section{Data Analysis}
\subsection{Dow Jones Return Rate}
In the period from mid-1972 to mid-1975 several major political events occurred that had potentially significant impacts on the US macroeconomy, including the Watergate scandal and the OPEC embargo. We examine the daily returns of the Dow Jones Industrial Average from July 3, 1972 to June 30, 1975 to detect abrupt changes in the underlying distribution of the return. The daily return of day $t$ with closing price $p_{t}^{\text{close}}$ is defined as
\[ 
R_t = \frac{p_t^\text{close}}{p_{t-1}^\text{close}} -1,
\]
which was modeled as a Gaussian distribution with mean $0$ with unknown variance. The data on closing price is publicly available through Google Finance.  Of specific interest is the change in the volatility of the return; in other words, we focus on estimating the precision (inverse variance) of the underlying Gaussian distribution. This dataset is also analyzed in the \cite{adams2007bayesian} paper and is plotted in Figure \ref{figure:DowJones} (top).  

We ran EXO and LEXO up to lag $\ell=100$  on the data set, with a gamma prior on the precision with shape and rate of 1 and $10^{-4}$, respectively. The hazard rate was fixed at $\lambda=1/250$. Figure \ref{figure:DowJones} (middle) gives the run length given by the MAP of the run length distribution.  Figure \ref{figure:DowJones} (bottom) illustrates the posterior mean of the precision over time of the return given by EXO and LEXO-$\ell$, where $\ell = 5, 30, 50$, $100$. 

\begin{figure}[H]
	
	\centering
	\makebox[\textwidth][c]{
		\begin{tabular}{c}
			
			\includegraphics[page = 2 , trim = {0 13.5cm 0 1.3cm}, clip , width = 17cm]{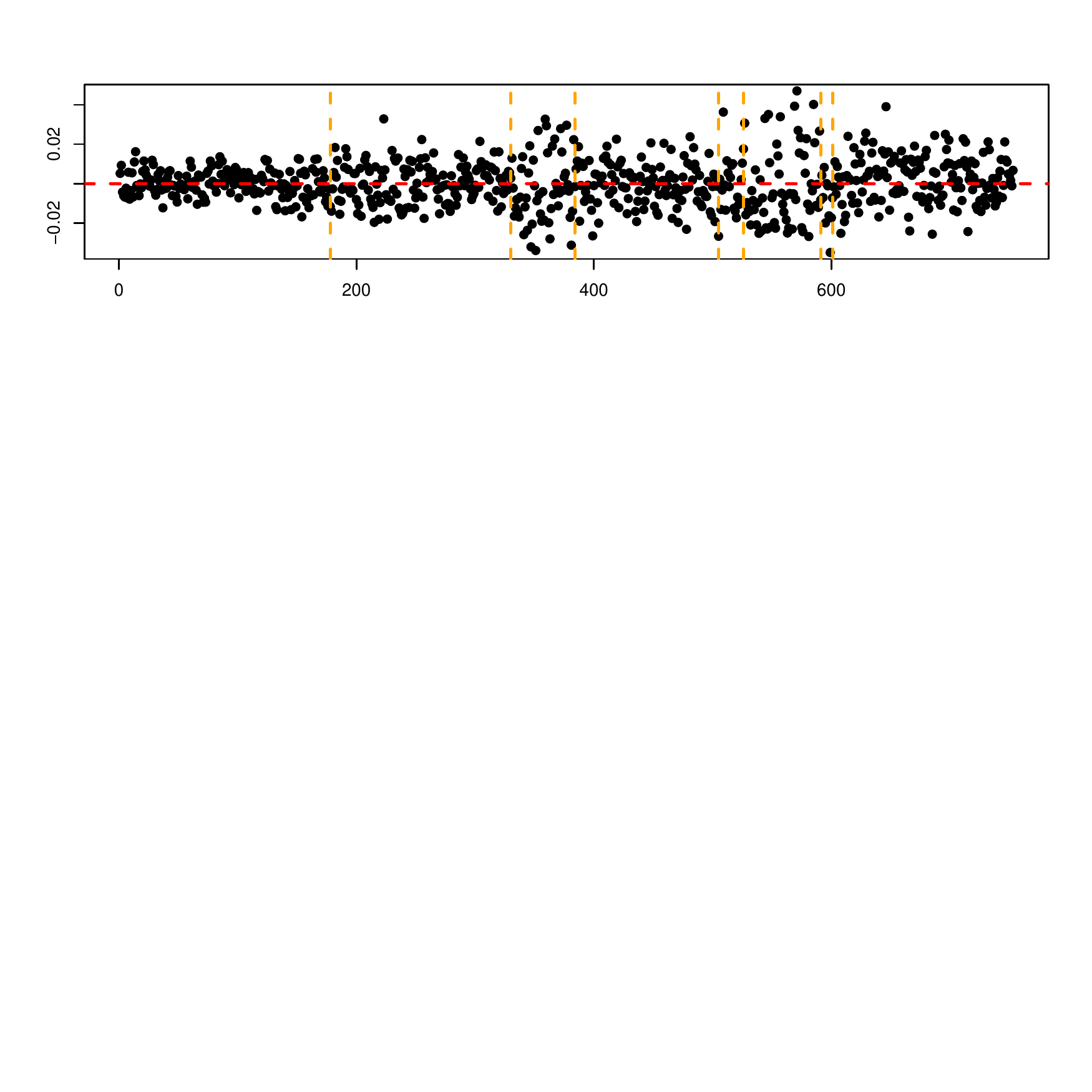} \\
			\includegraphics[page = 4 , trim = {0 13.5cm 0 1.3cm}, clip , width = 17cm]{DowJonesPlots.pdf} \\
			\includegraphics[page = 5 , trim = {0 12.9cm 0 1.3cm}, clip , width = 17cm]{DowJonesPlots.pdf} \\
			\includegraphics[page = 6 , trim = {0 15.7cm 0 1.3cm}, clip , width = 17cm]{DowJonesPlots.pdf} \\
			
		\end{tabular}
	}
	
	\caption{Dow Jones Industrial Average analysis' plots.  Top -- Raw data, with 0 plotted horizontally in red and the locations of the changepoints found by LEXO-100 in orange.  Middle -- Run length for the corresponding lags.  Bottom -- Posterior mean for the precision, for each time, for each lag.}
	\label{figure:DowJones}
	
\end{figure}

Figure \ref{figure:DowJones} shows that the higher lags (50 and 100) detect two additional changepoints near $t = 500$ and at $t=591$.  Additionally, the MAP of run length distribution and the posterior means of the precision given by LEXO-$\ell$ are much more stable than those given by EXO. Table \ref{section5:dj_table} below indicates changepoints and events that are likely associated with them.

\begin{table}[H]
	\centering
	\caption{Date and events that are potientally associated with the detected changepoints for the Dow Jones Return dataset.}
	\begin{tabular}{p{1cm}p{2cm} l l l}
		EXO & LEXO-100 & Event & Description & Date \\ \hline
		208 & 178 & 142 & US involvement with Vietnam War ends  & 1/27/1973 \\
		346 & 330 & 327 & OPEC oil embargo begins & 10/19/1973 \\
		424 & 384 & 379 & Nixon refuses to give tapes to the Senate   & 1/3/1974 \\
		NA & 505 & 504 & Threshold Test Ban Treaty signed  & 7/4/1974 \\
		544 & 526 & 526 & Nixon's involvement in Watergate revealed & 8/5/1974 \\
		NA & 591 & 591 & Democrats take control in Congress & 11/5/1974 \\
		657 & 601 & 602 & AT\&T anti-trust suit filed & 11/20/1974 \\ 
		\hline
	\end{tabular}
\vspace{1ex}

\raggedright  \small Note: The first two columns give the index where a sudden MAP shift occurs for EXO and LEXO-100, respectively.  For EXO, we ignore random jumps that appear to be mistakes from the procedure.  The event is the index where the associated even takes place, where if event occurs on a weekend it is put with the Friday of that week.
	\label{section5:dj_table}

\end{table}

\subsection{Coal Mine Disaster Data}

This classic dataset gives counts of coal mining accidents that killed at least ten or more in 112 years from March 15, 1851 and March 22, 1962. We model the number of accidents of each year in the period $x_t$ following a Poisson distribution with unknown rate. We ran EXO and LEXO up to lag $\ell=30$ on the dataset with a gamma prior using a shape and rate of 1 and $10^{-4}$. Figure \ref{figure:CoalMine} (top) shows the posterior mean of the rate parameter overlayed with the data.  Figure \ref{figure:CoalMine} (bottom) shows the run length associated with the MAP of the run length distribution. 

\begin{figure}[H]
	
	\centering
	\makebox[\textwidth][c]{
		\begin{tabular}{c}
			
			\includegraphics[page = 2 , trim = {0 13.5cm 0 1.3cm}, clip , width = 17cm]{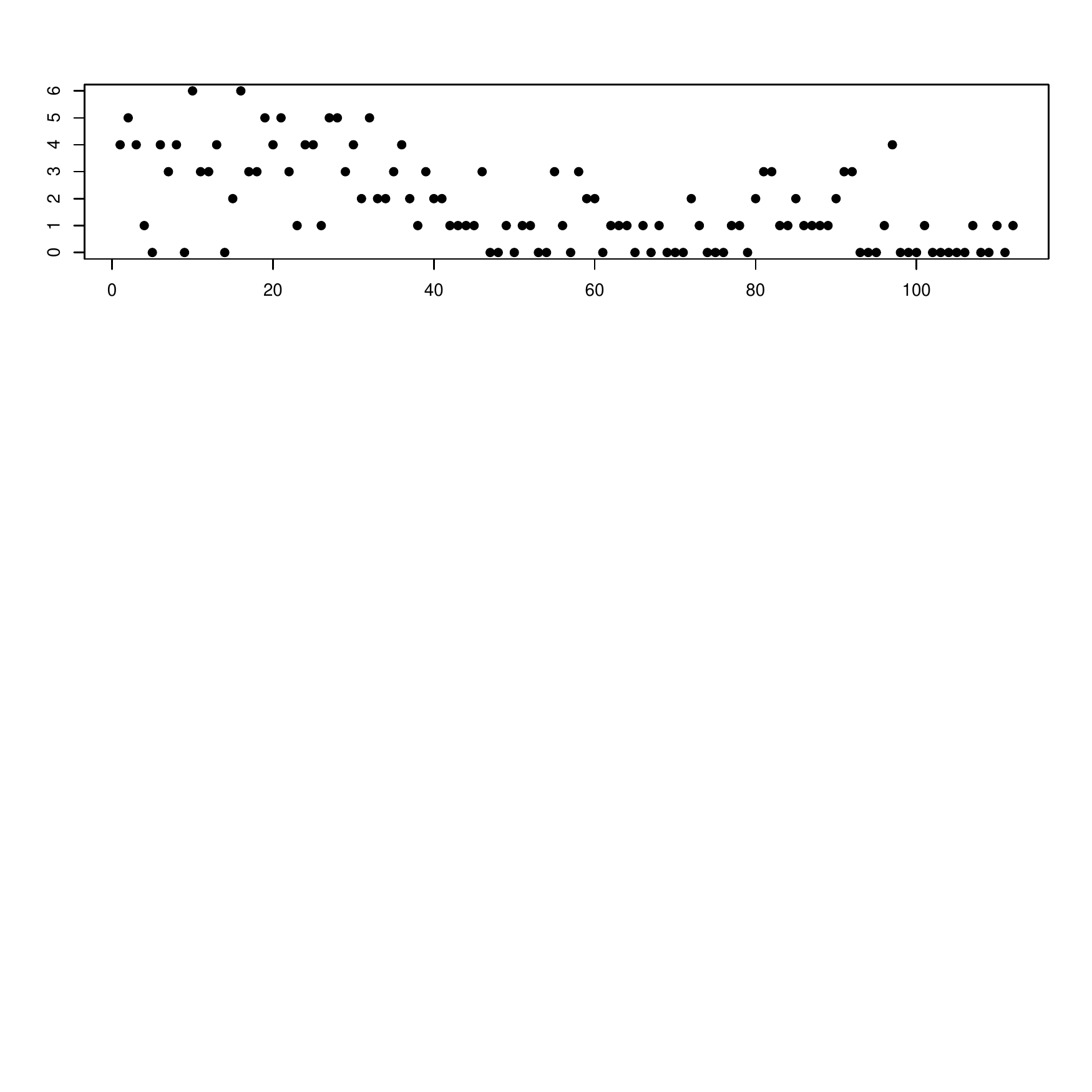} \\
			\includegraphics[page = 3 , trim = {0 12.9cm 0 1.3cm}, clip , width = 17cm]{CoalMinePlots.pdf} \\
			\includegraphics[page = 5 , trim = {0 15.7cm 0 1.3cm}, clip , width = 17cm]{CoalMinePlots.pdf} \\
			
		\end{tabular}
	}
	
	\caption{The coal mining analysis plots.  The top illustrates the plotted data with posterior means of the Poisson's rate parameter for corresponding LEXO.  The bottom illustrates the run length found by the MAP of the posterior run length distribution.}
	\label{figure:CoalMine}
	
\end{figure}

Figure \ref{figure:CoalMine} shows that the MAP of the run length distribution and the posterior mean of the rate given by EXO is very unstable. The more lags added, the more stable the plots became. The lines corresponding to LEXO-15, LEXO-25, and LEXO-30 are stable, with no sporadic changes in the MAP of the run length distribution over time. The MAP of the run length seemingly converges to $t = 41$ as LEXO-25 and LEXO-30 both find and settle at this point.  Historically, this has been point identified by other changepoint methodologies,  corresponding to the introduction of the Coal Mines Regulation Act in 1887.  The higher lags illustrate the mean change after the regulation, showing a drop from a rate of about 3 to a rate of around 1.

\section{Conclusion}
The introduction of a backward pass into the online changepoint framework brings much needed refinement in the inference.  The EXO framework is naturally susceptible to outliers and can take several time points before finding a new regime.  We illustrate LEXO-$\ell$ can refine the run length distributions estimated by EXO, including some level of convergence to true changepoints and better parameter estimates.  The LEXO framework is an iterative process, allowing for continually updated regime estimates for the practicioneer to use.  Moreover, for high-frequency data sets where many offline methods that rely on Monte Carlo Markov Chain (MCMC) are not scalable, LEXO could be used with reasonable confidence and computational power.  

\begin{center}
	{\large\bf SUPPLEMENTAL MATERIAL}
\end{center}
	R code that implements the LEXO algorithm for the three changepoint models can be found in the Github at \texttt{github.com/lnghiemum/LEXO}.

\bibliographystyle{biom}      
\bibliography{cites}
\end{document}